\newtheorem{theorem}{Theorem}[section]
\newtheorem{lemma}[theorem]{Lemma}
\newtheorem{definition}[theorem]{Definition}
\DeclareMathOperator*{\E}{{\mathbb{E}}}
\newcommand{\lp}{$P_L$}
\newcommand{\np}{$P_N$}
\newcommand{\trans}[1]{\mathsf{tran}(#1)}
\title{Differentially Private Label Protection in Split Learning }
\author{
Xin Yang\thanks{Correspondence to \texttt{yangxin.yx@bytedance.com}}\\\texttt{yangxin.yx@bytedance.com}
\and Jiankai Sun \\\texttt{jiankai.sun@bytedance.com}
\and Yuanshun Yao \\ \texttt{kevin.yao@bytedance.com}
\and Junyuan Xie \\ \texttt{junyuan.xie@bytedance.com}
\and Chong Wang \\ \texttt{chong.wang@bytedance.com}
}
\begin{document}

\maketitle
\begin{abstract}
    Split learning is a distributed training framework that allows multiple parties to jointly train a machine learning model over vertically partitioned data (partitioned by attributes). 
The idea is that only intermediate computation results, rather than private features and labels, are shared between parties so that raw training data remains private.
Nevertheless, recent works showed that the plaintext implementation of split learning suffers from severe privacy risks that a semi-honest adversary can easily reconstruct labels. 
In this work, we propose \textsf{TPSL} (Transcript Private Split Learning), a generic gradient perturbation based split learning framework that provides provable differential privacy guarantee.
Differential privacy is enforced on not only the model weights, but also the communicated messages in the distributed computation setting.
Our experiments on large-scale real-world datasets demonstrate the robustness and effectiveness of \textsf{TPSL} against label leakage attacks. We also find that \textsf{TPSL} have a better utility-privacy trade-off than baselines. 

% \JK{in comparing with baselines}.

\end{abstract}

\section{Introduction}
% \xin{\begin{itemize}
%     \item motivation of vFL
%     \item advantage of split learning: easy to implement
%     \item gradient-sharing is illusion; has to enforce DP
%     \item list our contribution
% \end{itemize}}

In the past decade, deep learning has achieved tremendous success in many applications including recommendation systems, autonomous driving and healthcare. 
Nevertheless, the abundant data needed to train deep learning models draws increasing concerns over user privacy,
%and strict regulations like CCPA~\citep{ccpa}, HIPPA~\citep{act1996health} and GDPR~\citep{GDPR} have been introduced to regulate how data can be transmitted and used.
and strict regulations like CCPA\footnote{California Consumer Privacy Act}, HIPAA\footnote{Health Insurance Portability and Accountability Act} and GDPR\footnote{General Data Protection Regulation, European Union} have been introduced to regulate how data can be transmitted and used.
To address privacy concerns,
\textit{Federated Learning} (FL)~\citep{mmr+17,hhh+20,ym20,gcyr20}
is proposed so that multiple entities can collaborate to train a model without sharing raw data.

One important category of FL is \emph{vertical Federated Learning} or vFL~\citep{ylct19}, where each party has training data of different subjects, but the attributes (e.g. features and labels) of the  data are the same.
vFL arises in many scenarios like finance and healthcare~\citep{ylct19,lsy+21}.
Some vFL frameworks apply secure multi-party computation (MPC) protocols including oblivious transfer, garbled circuits~\citep{gsb+16} and Homomorphic Encryption (HE)~\citep{hhi+17,yfcsy19,ylct19} to protect data privacy. However, this will cause significant overhead in computation and communication~\citep{ksw+18}.
%For instance, in \citep{zlx+20} it is estimated that it takes more than 9,000 hours to train AlexNet~\citep{ksh12} on CIFAR10~\citep{kh09} when homomorphic encryption (HE) is used to aggregate model updates.
%\xin{remove this example}For instance, \cite{zlx+20} estimated that it takes more than 9,000 hours to train AlexNet~\citep{ksh12} on CIFAR10~\citep{kh09} when HE is used to aggregate model updates.
Hence such methods only works with simple models like linear regression and logistic regression.

For larger neural network models,
%Gupta and Raskar~\citet{gr18} proposed the idea of \emph{Split Neural Network}, or split learning.
\citet{gr18} proposed the idea of \emph{split neural network}, or split learning. This is also the focus of this paper.
The goal of split learning is to train a neural network with the standard centralized training algorithms like mini-batch stochastic gradient descent (SGD) in an MPC setting, while no raw data is shared between parties.
To do so,
a neural network is split into multiple sub-networks, and each party holds one such sub-network. 
All the parties simulate centralized training as follows:
each party collects  input data from either other pieces or raw input owned by the party itself,
then they do the local computation within the piece,
and finally they send the output (e.g. embeddings or gradients) to other parties that need those intermediate results. Compared to secure MPC methods, split learning is straightforward to implement, and it has small computational/communication cost that is similar to centralized training.
% All the parties simulate the centralized training of the neural network by sending messages between the pieces.
% In this way, the raw training data is not transmitted, as only intermediate computation results of neural networks are used in communication.

%\JK{do we talk about too much (about half page) background about FL and vFL in in the introduction?} \xin{depends on how vertical FL is appreciated in the community}
%\Chong{I think right now it looks okay. if it is easy we can shorten the first two paragraphs a bit. if it is difficult, we can just leave it as it it.}

The security of split learning is based on the assumption  that neural networks behave like black-boxes and intermediate computation results are so complicated to parse that attackers cannot use them to recover the raw training data.
Nevertheless,
%it is not the case: \kevin{in the prior work, don't confront directly}
%recently Li et al.~\cite{lsy+21} showed that the adversarial party can simply recover private labels by computing the norm and direction of the gradients.
recently \citet{lsy+21} showed that the adversarial party can recover private labels with high accuracy by computing the norm and direction of the gradients. This is not totally unexpected. After all, there is no provable privacy guarantee for split learning. In order to apply split learning in practice, we need to build the learning algorithms on theoretical foundations.

%Hence, we must apply other privacy-preserving techniques so that split learning can be safely used in practice.
%\xin{no prior work is provable secure}

\emph{Differential privacy} (DP)~\citep{dmns06,dr14} is a widely applied measurement of privacy and has been deployed in many real-world applications~\citep{dky17,a18}. DP has become the standard practice in privacy-preserving machine learning~\citep{cms11,l11,acg+16}.
It is natural to ask if we can bring DP into the split learning setting. However, one major difference of split learning (or the more general vFL setting) between traditional machine learning algorithms is that the parties in split learning need to exchange messages (transcripts) and hence more information is exposed besides the model parameters. This is a new challenge for designing privacy-preserving learning protocols.

% Besides identifying the privacy risks, Li et al.~\cite{lsy+21} also proposed several gradient perturbation based protection methods, and empirically showed the effectiveness against certain attacks.
% However,
% they did not provide a formal privacy guarantee for their proposed methods; 
% in other words, it is still plausible that some attacks may still survive under the protection methods.
% We need to design protection methods with quantifiable privacy,

%\kevin{need to add a paragraph about why need DP in this setting.}
%\xin{Li et al. has a noise-perturbation based protection -> lack of privacy guarantee -> DP: widely used to measure privacy -> advantage of DP -> ordinary dp is on model weights -> we also need to protect communication}

In this work,
we present Transcript Private Split Learning, or \textsf{TPSL},  a novel framework of split learning with strong provable DP guarantee.
In particular,
we focus on protecting label information with DP.
This is because not only label leakage is a known privacy threat in split learning~\citep{lsy+21}, but also there are many cases where  labels are sensitive and require protection, like online advertising and recommendation systems~\citep{ggk+21}.

% In this work,
% we present \textsf{TPSL} (Transcript Private Split Learning),
% a novel framework of split learning with strong provable \emph{differential privacy} \JK{(DP)} guarantee.
% Differential privacy~\cite{dmns06,dr14} is a widely applied measurement of privacy and has been deployed in many real-world applications~\cite{dky17,a18}. \JK{we may discuss label dp work here?}
% In particular,
% we focus on protecting label information with differential privacy.
% This is because not only label leakage is a known privacy threat in split learning~\cite{lsy+21}, but also there are many cases where  labels are sensitive and require protection, like online advertising and recommendation systems~\cite{ggk+21}.

%\xin{emphasize dp proof in contributions}
%\textsf{TPSL} is a generic Vertical Federated Learning framework in the sense that \textsf{TPSL} can easily turn into any centralized gradient-based training algorithm into an MPC protocol.
The security of \textsf{TPSL} is provided by our rigorous and complete proof on DP guarantee.
The computation and communication cost of \textsf{TPSL} is similar to the naive implementation of split learning, as differential privacy can be efficiently enforced by noise perturbation mechanisms.
Inspired by the white-box attack of split learning~\citep{ekc21}, %\JK{some description/references of white-box attack here?},
\textsf{TPSL} adds suitable noise to the gradients so that the given privacy budget is met.
Compared to the baseline of adding isotropic noise on the transcripts,
\textsf{TPSL} deals with the perturbation more carefully and efficiently. 
%\JK{is it possible to show some numbers here to demonstrate it's much better than the baseline?}
Experiments over large-scale data verify the effectiveness of \textsf{TPSL} to achieve better utility-privacy trade-off.

%is inspired by the white-box attack of split learning, and 
\section{Background}

\subsection{Split Learning}
Split learning~\citep{vgsr18,gr18,akk+20,csm+20} is a distributed deep learning framework that allows multiple parties to jointly train a neural network without directly sharing raw data. In this work, we focus on two-party split learning over binary classification problem, where the domain of features is $\mathcal{X}$ and the label is in $\mathcal{Y}=\{0,1\}$.
For simplicity of exposition we consider the scenario where the training data is perfectly vertically split.
Following the notations in \cite{lsy+21}, %\kevin{update reference to iclr'22. nice to cite a recent top-tier paper.} 
{\lp} denotes the label party that holds all the labels and {\np} denotes the non-label party that holds all the features.
We assume that at the beginning of split learning, features and labels have already been aligned by protocols such as private set intersection~\citep{kkrt16,psz18}.

The goal of the two parties is to jointly train a prediction model $h_{\theta_L}\circ f_{\theta_N}$.
Here $f:\mathcal{X}\rightarrow \mathbb{R}^{d}$ is the feature representation function held by \np, $\theta_N$ is the model parameters of \np{} and $d$ is the dimension of the intermediate representation (embeddings).
Similarly, $h:\mathbb{R}^d\rightarrow [0,1]$ is the prediction function held by \lp{} and $\theta_{L}$ is the model parameters of \lp.

In this paper, we consider $h,f$ are neural networks with nonlinear activation functions, such as ReLU. We consider the standard mini-batch stochastic gradient descent (SGD) training with loss function $\ell$.
For the ease of exposition,
we consider an online fashion of training in the sense that each training sample is used once, which is presented in Algorithm~\ref{alg:sgd}.
On a high level, in the forward phase,
\np{} computes the embeddings $f(\mathbf{x})$ and passes them to \lp;
\lp{} continues to compute the prediction and loss.
In the backward phase, \lp{} computes the gradients $g=\frac{\partial \ell(h(f(\mathbf{x}),y))}{\partial f(\mathbf{x})}$ and return them to \np.
Finally, \lp{} and \np{} update their models with the gradients.
% A detailed pseudocode can be found in Algorithm~\ref{alg:sgd} in Appendix~\ref{sec:sgd_pseudocode}.

\setlength{\algomargin}{15pt}
\begin{algorithm*}[!ht]
%\SetAlgoLined
 \caption{Stochastic Gradient Descent in Split Learning}\label{alg:sgd} %\kevin{can we move this algo to appendix? it's a large space and just introduce the normal vfl protocol that is not our contribution. i feel it's risky to use that much space to talk about something not invented by us that sounds like it's a our contribution.}}
\KwIn{Features %$X=\{\mathbf{x}_1,\mathbf{x}_2,\cdots,\mathbf{x}_n\}$ 
$X$ held by \np. 
Labels $Y$ %$Y=\{y_1,y_2,\cdots,y_n\}$ 
held by \lp. $|X|=|Y|=n$.}
\KwPar{Number of batches $B$. Step size $\eta$. Loss function $\ell:[0,1]\times \{0,1\}\rightarrow \mathbb{R}_{\geq 0}$.}
Training samples are divided into $B$ batches.\\
$\theta_N,\theta_L$ are initialized with Gaussian initialization.\\
\For{$b=1,2,\cdots,B$}{
\tcp{Forward phase}
\np{} computes embeddings $e_{b,i}:=f_{\theta_N}(\mathbf{x}_{b,i})$ for $i=1,\cdots,n/B$ where $\mathbf{x}_{b,1},\mathbf{x}_{b,2},\cdots, \mathbf{x}_{b,n/B}$ are the features of samples in batch $b$.\\
\np{} sends $\{e_{b,1},\cdots,e_{b,n/B}\}$ to \lp.\\
\tcp{Backward phase}
\lp{} computes the prediction of the neural network $h$ as $\tilde y_{b,i} := h_{\theta_L}(f_{\theta_N}(\mathbf{x}_{b,N}))$ for $i=1,\cdots,n/B$.\\
\lp{} computes losses $\ell_{b,i} := \ell(\tilde y_{b,i}, y_{b,i})$ for $i=1,\cdots,n/B$ where $y_{b,1},y_{b_2},\cdots,y_{b,n/B}$ ae the labels in batch $b$.\\
\lp{} computes gradients $g_{b,i} := \frac{\partial \ell_{b,i}}{\partial e_{b,i}}$ for $i=1,\cdots,n/B$.\\
\lp{} sends $\{g_{b,1},\cdots,g_{b,n/B}\}$ to \np.\\
\tcp{Model updates}
\lp{} updates $\theta_L$ by $\theta_L\leftarrow \theta_L- \eta\cdot \frac{B}{n}\cdot \sum_{i=1}^{n/B}\frac{\partial \ell_{b,i}}{\partial \theta_L}$.\\
\np{} computes $\frac{\partial \ell_{b,i}}{\partial \theta_N} = g_i^\top \frac{\partial e_{b,i}}{\theta_N}$ and then  updates $\theta_N$ by $\theta_N\leftarrow \theta_N- \eta\cdot \frac{B}{n}\cdot \sum_{i=1}^{n/B}\frac{\partial \ell_{b,i}}{\partial \theta_N}$.\\
}
\end{algorithm*}

\subsection{Differential Privacy}

Differential privacy (DP)~\citep{dmns06,dr14} is a quantifiable and rigorous privacy framework.
% We can use DP to measure the privacy of information exchange in MPC protocols.
% For textbook introduction of DP, readers may refer to \cite{dr14}.
We adopt the following definition of DP.
\begin{definition}[Differential Privacy]
Fix $\epsilon,\delta\in \mathbb{R}_{\geq 0}$.
We say that a randomized mechanism $\mathcal{M}$ is $(\epsilon,\delta)$-differentially private, if for any neighboring datasets $D,D'$,
and for any subset $S$ of possible output of $\mathcal{M}$,
\begin{align*}
    \Pr[\mathcal{M}(D)\in S]\leq e^{\epsilon}\cdot \Pr[\mathcal{M}(D')\in S]+\delta.
\end{align*}
\end{definition}

% Split learning falls into the category of  multi-party computation (MPC) and we care about the privacy of the information shared between the parties.
In split learning, which is a type of MPC, we would like to protect privacy of information shared between the parties.
For an MPC protocol $A$,
we denote $\trans{A}$ as the \textit{transcript} of $A$, %running on dataset $D$,
namely the messages shared between the parties.
In split learning,
the transcript includes (1) embeddings in the forward phase $f(\mathbf{x})$ and (2) gradients in the backward phase $g=\frac{\partial \ell(h(f(\mathbf{x}),y))}{\partial f(\mathbf{x})}$. %\kevin{make it clearer. check.}
We can use DP to measure the privacy of information exchange in MPC protocols,
which we call \textit{transcript differential privacy}: %\JK{do we have to show some examples of transcript here? labels/cut layer embeddings?}:
\begin{definition}[Transcript Differential Privacy]
Fix $\epsilon,\delta\in \mathbb{R}_{\geq 0}$.
We say that a randomized mechanism $A$ between multiple parties is $(\epsilon,\delta)$-transcript differentially private,
if $\trans{A}$ is $(\epsilon,\delta)$-differentially private.
\end{definition}

The notion of neighboring datasets determines the privacy that DP aims to protect. As we focus on label protection, we say two datasets $D,D'$ are neighbouring if they only differ at one label in the samples.
This definition of neighbouring dataset is similar to the setting of label DP~\citep{ggk+21,mmp+21}.
%\JK{we talked about transcript differential privacy in the previous paragraph. is it ok to say we only care about label privacy here?} 
%\kevin{I feel we need to talk about label dp here. reviewer would be unhappy if we don't mention it. the notation is exactly the same.}

\textit{Sensitivity} in differential privacy is defined as the following:
Fix $p\geq 0$,
the $\ell_p$ sensitivity of function $f$ is $\max_{D,D'}\|f(D)-f(D')\|_p$ where the maximization is over neighbouring dataset $D,D'$.

% The \textit{sensitivity} $\Delta(f)$ is defined as $\Delta(f):=\max_{D,D'}\|f(D)-f(D')\|_2$ where the maximization is over neighbouring dataset $D,D'$.
% Split learning falls into the category of  multi-party computation (MPC) and we care about the privacy of the information shared between the parties.
% For an MPC protocol $A$,
% we denote $\trans{A(D)}$ as the \textit{transcript} of $A$ running on dataset $D$,
% namely the messages shared between the parties.

% Differential privacy (DP)~\cite{dmns06} is a quantifiable and rigours privacy notion.
% We can use DP to measure the privacy of information exchange in MPC protocols.
% Formally, we have:
% \begin{definition}[Differential Privacy]
% Fix $\epsilon,\delta\in \mathbb{R}_{\geq 0}$.
% We say that a randomized mechanism $A$ between multiple parties is $(\epsilon,\delta)$-differentially private, if for any neighboring datasets $D,D'$,
% and for any subset $S$ of possible transcripts of $A$,
% \begin{align*}
%     \Pr[\trans{A(D)}\in S]\leq e^{\epsilon}\cdot \Pr[\trans{A(D')}\in S]+\delta.
% \end{align*}
% \end{definition}
% The notion of neighboring datasets determines the privacy that DP aims to protect. As we care about label protection, we say two datasets $D,D'$ are neighbouring if they only differ at one label in the samples.

\subsection{Threat Model}

% We are interested in protecting label information from adversarial \np{}.
% Here, we consider the case that \np{} is semi-honest,
% i.e., it will faithfully run the split learning protocol (e.g. no adversarial chosen features),
% but it may be curious in the sense that \np{} would like to infer the labels held by \lp{}.
% Depending on the input to the adversarial \np{}, we consider two types of attacks: \textit{black-box} attack and \textit{white-box} attack. In the black-box setting, \np{} only has access to its own input (features set $X$ and model parameters $\theta_N$) as well as the transcript $\trans{A}$.
% Still, it has been shown that semi-honest \np{} could already infer the correct labels based on gradient information~\citep{lsy+21}.
% In the white-box setting, \np{} can also access $\theta_L$, which makes \np{} even more powerful.
% Although white-box attack appears more unrealistic,
% by understanding the advantage of attackers in the white-box setting, we can learn more about the privacy risks and then design more secure computation protocols.

As the defender, the label party \lp{} wants to protect the labels from the attacker (  non-label party \np{}). 
We consider  the attacker is  semi-honest who faithfully executes the split learning protocol (e.g. no adversarial chosen inputs), but it aims to infer the labels held by \lp{} from the information received during the protocol execution. In terms of attacker's capability, we consider two threat models: \textit{black-box} attack and \textit{white-box} attack. In black-box setting, the attacker only has access to its own inputs (features set $X$ and model parameters $\theta_N$) and the transcript $\trans{A}$. As shown by~\citet{lsy+21}, the attacker can still infer the labels based on gradient information under the black-box setting. In white-box setting, the attacker can also access $\theta_L$. In our design, we leverage insights coming from the white-box attack because it is the strongest attack assumption that should be considered in order to design a strong defense.  %\kevin{rewrote to make it clearer. check.}

%\Chong{Please explain why we need to deal with white box attack, it is not obvious.} \xin{fixed}

\subsection{Notations} For positive integers $m<n$, $[n]$ represents the set $\{1,2,\cdots,n\}$,
and $[m,n]$ represents the set $\{m,m+1,\cdots,n\}$.
Let $\mathcal{N}(\theta,\Sigma)$ denote the (multi-variate) normal distribution with mean $\theta$ and covariance $\Sigma$.
For a vector $v$ and a set $S$ that is a subset of the underlying coordinates of $v$, $[v]_S$ represents the restriction of $v$ on $S$, i.e., if $S=\{s_1,s_2,\cdots,s_t\}$,
then $[v]_S=(v_{s_1},v_{s_2},\cdots,v_{s_t})$. %\kevin{move this to sec 3? it's not used in this section.}

%\section{Optimized gradient perturbation schemes}
\section{Design of Differentially Private Noise}
%\kevin{How about calling it Shortest Distance Noise? It's unclear what white-box means from the name.}\xin{shortest distance is not mentioned before this section.}
%\kevin{maybe call it ``design insights'' or something like it, it's like an intro to the next section, the main technique.}

\subsection{Gaussian mechanism under white-box attack}
\label{sec:white_box_attack}
Gaussian mechanism~\citep{dr14} is a common post-hoc mechanism to implement DP. 
Let $f$ be a deterministic function with range $\mathbb{R}^m$,
and $\Delta_f$ be the $\ell_2$ sensitivity of $f$.
% The \textit{sensitivity} $\Delta(f)$ is defined as $\Delta(f):=\max_{D,D'}\|f(D)-f(D')\|_2$ where the maximization is over neighbouring dataset $D,D'$.
To achieve DP, Gaussian mechanism adds a Gaussian noise $R\in \mathbb{R}^m$ to the output of $f$.
It is shown that when $R\sim \mathcal{N}(0,\Delta_f^2\sigma^2\cdot I_m)$ with $\sigma \geq \frac{\sqrt{2\ln(1.25/\delta)}}{\epsilon}$,
the randomized function $f^{dp}(D):=f(D)+R$ is $(\epsilon,\delta)$-DP.
% We adopt the version from~\cite{dr14} for Gaussian mechanism:
% \begin{lemma}[Gaussian mechanism]
% For any deterministic function $f$ with range $\mathbb{R}^m$,
% define 
% \end{lemma}

%We apply Gaussian mechanism on $\trans{A}$ to ensure $\trans{A}$ is DP.
To ensure $\trans{A}$ to be DP,
a natural choice is to apply Gaussian mechanism on $\trans{A}$, and for the label party \lp{} who wants to protect its label, it is reasonable to add noise to the part of the $\trans{A}$ that \lp{} owns, namely the gradients. %\xin{check later}
%Since we only care about label protection, it actually suffices to add isotropic Gaussian noise on the gradients sent by \lp{}. 
This approach is also one of our baselines in the experimental section. %\kevin{why is adding noise to gradient enough to protect label? what's the logic behind it?}

%\Chong{Here we talked about Gaussian mechanism, but we never talk about it later. need to explain why.}\xin{we consider Gaussian mechanism as a baseline in experiment section (Sec \ref{sec:exp_dp_split})} \Chong{Okay, we just say it here.}

% Now let us turn to the side of adversary to see what advantage it can achieve with the presence of Gaussian mechanism. 

To improve from the vanilla Gaussian mechanism, we consider what the adversary might do when facing the basic Gaussian mechanism protection under the strong white-box capability. 
This helps us design the protection method against that strong white-box attack, which can bring insights on designing transcript DP split learning protocols.
%Then we can design a DP mechanism defending against that strong white-box attack and therefore it is reasonable to believe the resulting protection to be strong. \kevin{tried to smooth the logic. check.}

Fix a training sample $(\mathbf{x},y)$, the adversarial \np{} knows the feature $\mathbf{x}$ and its forwarded embedding $f_{\theta_N}(\mathbf{x})$. It also receives a perturbed gradient $\tilde g:=g_y+R$ after \lp's DP protection ,
where $g_y\in \mathbb{R}^d$ is the correct gradient with respect to the label $y$ held by \lp, and $R\sim \mathcal{N}(0,\sigma^2 I_d)$ is the Gaussian noise added with suitable variance $\sigma^2$.

%\Chong{The transition to the ``key point'' below is not quite obvious.}\xin{fixed}

%The key point is that in the white-box setting, \np{} also knows $\theta_L$, the parameters of \lp. This enables \np{} to enumerate possible values of $y$ and compute the gradients respectively. Then \np{} can guess the $y$ that corresponds to the gradient with shorter distance to the perturbed gradient~\citep{ekc21}.

% While it may not be straightforward for \np{} to conduct a black-box attack,
% when \np{} knows $\theta_L$, the parameters of \lp{},
% \np{} can easily perform a white-box attack:
Under the white-box assumption, when \np{} knows $\theta_L$, it can perform the following attack:
\np{} can guess the $y$ that corresponds to the gradient with shorter distance to the perturbed gradient~\citep{ekc21}.
Formally, \np{} can first compute the gradients with respect to label $0,1$ by
\begin{align*}
    g_0:=&~\frac{\partial \ell(h_{\theta_L}(f_{\theta_N}(\mathbf{x})),0)}{\partial f_{\theta_N}(\mathbf{x})},\\
    g_1:=&~\frac{\partial \ell(h_{\theta_L}(f_{\theta_N}(\mathbf{x})),1)}{\partial f_{\theta_N}(\mathbf{x})}.
\end{align*}
Then shortest distance attack guesses the value of $y$ as
% \begin{align*}
%     \bar y :=\arg\min_{\tilde y\in \{0,1\}} \|\tilde g - g_{\tilde y}\|_2.
% \end{align*}
% \kevin{
% perhaps it's clearer to write in the following way, there's only two choices, I feel no need to use argmin.
\begin{align*}
    \bar y :=
    \begin{cases}
        0  & \mbox{ If }\|\tilde g - g_0\|_2 \leq \|\tilde g - g_1\|_2,  \\
        1  & \text{ Else.}
    \end{cases}
\end{align*}

We name this attack  \textit{shortest distance attack}. 
If there is no protection, then shortest distance attack can obtain the correct label $y$ almost surely. %\kevin{check.}
Now we explain how we can leverage shortest distance attack to design an improved mechanism.
% We next investigate the advantage of shortest distance attack  under Gaussian mechanism.
Notice that the decision of shortest distance attack is merely determined by the difference $\|\tilde g - g_{0}\|_2^2-\|\tilde g - g_{1}\|_2^2$.
By expanding this difference, we have
\begin{align*}
    &~\|\tilde g - g_{0}\|_2^2-\|\tilde g - g_{1}\|_2^2\\
    = & ~\langle g_1-g_0,2\tilde g - g_0-g_1\rangle\\
    = & ~\langle g_1-g_0,2(g_y+R) - g_y-g_{1-y}\rangle\\
    = & ~\langle g_1-g_0,g_y-g_{1-y}\rangle+2\langle g_1-g_0,R\rangle.
    \end{align*}
The first term $\langle g_1-g_0,g_y-g_{1-y}\rangle$ is determined by the true label $y$ and does not depend on the noise $R$.
For the second term, notice that $\langle g_1-g_0,R\rangle$ only depends on the projection of $R$ in the direction of $g_1-g_0$.
In other words, components of $R$ on other orthogonal directions do not affect the decision of shortest distance attack. 

The above calculation suggests that $g_0-g_1$ is a special direction,
which inspires us to consider noise perturbation mechanism that focuses on the direction of $g_0-g_1$.
% \JK{Inspired by the above observation, we next propose a generic label protection mechanism .}
% Hence we have the key observation: 
% \begin{observation}\label{obs:optimal_direction}
% In order to protect label privacy, it is sufficient to add noise only in the direction of $g_0-g_1$. \kevin{although it's an observation not a lemma, i feel it's still risky to single it out. what we show is it's optimal for that specific white-box attack, but there's no ground for us to claim it's optimal to ``protect label privacy''. }\xin{do not use single paragraph}
% \end{observation}

\subsection{Generic label protection mechanism}\label{sec:grad_perturb_proof}

%Based on Observation~\ref{obs:optimal_direction},
Based on the insights inspired by the white-box attack,
we propose \textsf{GradPerturb}, a general gradient-perturbation scheme for \lp{} in Algorithm~\ref{alg:noisy_gradient}.
\textsf{GradPerturb} is specified by a distribution $U$ over $\mathbb{R}$ which controls the noise scale.
%\textsf{GradPerturb} computes the gradients $g_0,g_1$,
With gradients $g_0$ and $g_1$,
\textsf{GradPerturb} adds noise to $g_y$ in the direction of $g_0-g_1$.
% The scale of the noise is controlled by the distribution $U$.

\begin{algorithm}[!t]
%\SetAlgoLined
\caption{\textsf{GradPerturb} equipped with distribution $U$}\label{alg:noisy_gradient}
\KwIn{True label $y\in \{0,1\}$. Gradients $g_0$ and $g_1$ with respect to label $0$ and $1$.}
%\KwPar{Distribution $U$ over $\mathbb{R}$.} \kevin{what's parameter?}
%Embedding $e=f_{\theta_N}(\mathbf{x})$ forwarded from \np{}. Model parameters $\theta_N$ and label $y$ held by \lp{}. Loss function $\ell$.}
%\For{$i\in\{0,1\}$}{
%\lp{} computes $g_i=\frac{\partial \ell\left(h_{\theta_L}(f_{\theta_N}(\mathbf{x})),i\right)}{\partial f_{\theta_N}(\mathbf{x})}$.
%}
\lp{} samples $u\sim U$.\\
\Return{$g_y+u\cdot (g_{1-y}-g_y)$.}
\end{algorithm}

We explore two choices of the distribution $U$:
\begin{itemize}
    \item (Laplace perturbation) $U$ is $\mathsf{Lap}(b)$, i.e., the Laplace distribution centered at $0$ with scale $b$.
    The PDF of $\mathsf{Lap}(b)$ is given by $p(x|b)=\frac{1}{2b}e^{-\frac{|x|}{b}}$.
    \item (Discrete perturbation) $U$ is a Bernoulli Distribution $\mathsf{Bern}(p)$ for $p\in[0,\frac{1}{2}]$. Namely, $U$ take values from $\{0,1\}$ with $\Pr[U=1]=p$ and $\Pr[U=0]=1-p$.
\end{itemize}
We will show that both mechanisms are differential private. 
The detailed proof is deferred to Appendix~\ref{sec:missing_proofs}.
% \begin{lemma}[Gaussian perturbation is private]\label{lem:gaussian_perturb}
% Let $U=\mathcal{N}(0,\sigma^2)$. Fix $\epsilon,\delta>0$. When $\sigma\geq \frac{\sqrt{2\ln (1.25/\delta)}}{\epsilon}$, $\textsf{GradPerturb}$ is $(\epsilon,\delta)$-DP with respect to the label $y$.
% \end{lemma}
\begin{lemma}[Laplace perturbation is differentially private]\label{lem:laplace_perturb}
Let $U=\mathsf{Lap}(b)$. Fix $\epsilon>0$. When $b\geq 1/\epsilon$, $\textsf{GradPerturb}$ is $(\epsilon,0)$-DP with respect to the label $y$.
\end{lemma}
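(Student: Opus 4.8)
The plan is to reduce this $d$-dimensional mechanism to a one-dimensional comparison of two shifted Laplace densities. First I would fix a sample $(\mathbf{x},y)$ and observe that, in our label-level neighboring relation, the only way to change the dataset is to flip this label; crucially, flipping $y$ leaves the two candidate gradients $g_0,g_1$ untouched, since they are functions of the features and the model parameters alone. Hence the DP comparison is precisely between the output law of $\textsf{GradPerturb}$ under $y=0$ and under $y=1$. Writing $v:=g_1-g_0$, the two outputs are $g_0+u\,v$ (for $y=0$) and $g_1-u\,v=g_0+(1-u)v$ (for $y=1$), with $u\sim\mathsf{Lap}(b)$. If $v=0$ the two laws coincide as point masses at $g_0$ and the claim is immediate, so I would henceforth assume $v\neq 0$.

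Next I would note that both output distributions are supported on the single affine line $L=\{g_0+t\,v:t\in\mathbb{R}\}$, and that $t\mapsto g_0+t\,v$ is a bijection onto $L$. This lets me pull everything back to the scalar parameter $t$: under $y=0$ we have $t=u\sim\mathsf{Lap}(b)$ with density $p_0(t)=\frac{1}{2b}e^{-|t|/b}$, while under $y=1$ we have $t=1-u$, whose density is $p_1(t)=\frac{1}{2b}e^{-|t-1|/b}$, a Laplace centered at $1$. Because both laws live on the same line and this parametrization is a bijection, for every measurable $S\subseteq\mathbb{R}^d$ the output probabilities $\Pr[\mathcal{M}(D)\in S]$ equal the corresponding probabilities in the $t$-variable, so it suffices to bound the scalar density ratio pointwise.

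Finally I would bound the ratio by the reverse triangle inequality $\big||t-1|-|t|\big|\leq 1$:
\begin{align*}
\frac{p_0(t)}{p_1(t)} = e^{(|t-1|-|t|)/b} \leq e^{1/b} \leq e^{\epsilon},
\end{align*}
where the last step uses $b\geq 1/\epsilon$; the symmetric bound on $p_1/p_0$ follows identically. Integrating this pointwise bound over any event $S$ yields $\Pr[\mathcal{M}(D)\in S]\leq e^{\epsilon}\Pr[\mathcal{M}(D')\in S]$ with no additive slack, i.e. $(\epsilon,0)$-DP with respect to $y$.

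The step I expect to require the most care is the \emph{reduction to one dimension}. Since both output laws are degenerate in $\mathbb{R}^d$ (concentrated on a measure-zero line), one cannot naively compare Lebesgue densities on $\mathbb{R}^d$; the argument must instead transport the measures through the affine bijection onto $L$. The reason the threshold $b\geq 1/\epsilon$ is exactly right is that the construction engineers the effective sensitivity to be the unit shift along $v$: the true-label and flipped-label gradients sit at parameters $t=0$ and $t=1$, so the induced one-dimensional problem is a Laplace mechanism of sensitivity $1$.
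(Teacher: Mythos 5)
Your proof is correct and is essentially the paper's argument run in the opposite direction: the paper realizes $\textsf{GradPerturb}$ as the post-processing $h(x)=x\,g_1+(1-x)\,g_0$ of a scalar sensitivity-$1$ Laplace mechanism on the label and invokes post-processing invariance (using the symmetry of $\mathsf{Lap}(b)$ to match the $y=1$ case), whereas you pull the output law back through the same affine bijection onto the line $g_0+t(g_1-g_0)$ and verify the density ratio $e^{(|t-1|-|t|)/b}\leq e^{1/b}\leq e^{\epsilon}$ by hand. Both rest on the identical observation that flipping $y$ shifts the scalar parameter by exactly one unit, so the effective $\ell_1$ sensitivity is $1$ and $b\geq 1/\epsilon$ suffices.
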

% \begin{proof}
% We start with an auxiliary statement. 
% Let the universe $\mathcal{X}\subset \mathbb{R}$ be $\{0,1\}$. 
% Consider the identical mapping $f(x):=x$ and the Gaussian mechanism $f^{DP}=f+r$ with $r\sim \mathcal{N}(0,\sigma^2)$.
% Then the claim is that 
% when $\sigma\geq \frac{\sqrt{2\ln (1.25/\delta)}}{\epsilon}$,
% $f^{DP}$ is $(\epsilon,\delta)$-DP.
% This is because the sensitive of $f$ is $1$,
% hence the DP guarantee follows from the property of Gaussian mechanism.

% Now we consider the deterministic mapping $h:\mathbb{R}\rightarrow \mathbb{R}^d$ defined as
% \begin{align*}
%     h(x)= x\cdot g_1 + (1-x)\cdot g_0.
% \end{align*}
% Since $f^{DP}$ is $(\epsilon,\delta)$-DP,
% and DP is immune to post processing,
% $h(f^{DP})$ is also $(\epsilon,\delta)$-DP.
% On the other hand, we have
% \begin{align*}
%     h(f^{DP}(0))= & ~g_0+r\cdot(g_1-g_0),\\
%     h(f^{DP}(1))= & ~g_1-r\cdot(g_0-g_1).\\
% \end{align*}
% Since $r$ is symmetric, $-r$ is distributed identically to $r$.
% Hence we conclude that $h(f^{DP})$ is the same as $\textsf{GradPerturb}$,
% which completes the proof of the lemma.
% \end{proof}

\begin{lemma}[Discrete perturbation is differentially private]\label{lem:discrete_perturb}
Let $U=\mathsf{Bern}(p)$. Then $\textsf{GradPerturb}$ is $(\ln \frac{1-p}{p},0)$-DP with respect to the label $y$.
\end{lemma}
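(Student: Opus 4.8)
The plan is to analyze the output distribution of \textsf{GradPerturb} directly when $U=\mathsf{Bern}(p)$, and verify the DP definition by hand rather than invoking a general mechanism. For a fixed training sample with feature $\mathbf{x}$ (and hence fixed, deterministic gradients $g_0,g_1$, since these depend only on $\mathbf{x}$, $\theta_L$, $\theta_N$ and \emph{not} on the true label), the mechanism with true label $y$ returns $g_y + u\cdot(g_{1-y}-g_y)$ where $u\in\{0,1\}$. When $u=0$ the output is $g_y$; when $u=1$ the output is $g_y + (g_{1-y}-g_y) = g_{1-y}$. So the output is supported on the two-point set $\{g_0,g_1\}$ regardless of $y$, which is the key structural observation: the support does not reveal $y$.

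Next I would write down the output probabilities explicitly as a function of the true label. If $y=0$, then $u=0$ gives $g_0$ (probability $1-p$) and $u=1$ gives $g_1$ (probability $p$). If $y=1$, then $u=0$ gives $g_1$ (probability $1-p$) and $u=1$ gives $g_0$ (probability $p$). Thus, assuming $g_0\neq g_1$ (the degenerate case $g_0=g_1$ is trivial since then the output is constant and the mechanism is $0$-DP), the two neighboring labels induce the distributions
\begin{align*}
\Pr[\text{output}=g_0\mid y=0] = 1-p, &\qquad \Pr[\text{output}=g_1\mid y=0] = p,\\
\Pr[\text{output}=g_0\mid y=1] = p, &\qquad \Pr[\text{output}=g_1\mid y=1] = 1-p.
\end{align*}

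Finally I would check the $(\epsilon,0)$-DP inequality $\Pr[\mathcal{M}(y)\in S]\leq e^{\epsilon}\Pr[\mathcal{M}(y')\in S]$ for every subset $S$ of the output space. Since the support has only two atoms, it suffices to bound the pointwise likelihood ratio over $\{g_0,g_1\}$; the worst-case ratio is $\max\{(1-p)/p,\; p/(1-p)\}$. Because $p\in[0,\tfrac12]$, we have $1-p\geq p$, so the larger ratio is $(1-p)/p$, giving $\epsilon=\ln\frac{1-p}{p}$ exactly, with $\delta=0$ as claimed. I would note that the notion of neighboring datasets here is flipping a single label, so it is enough to compare $y=0$ against $y=1$ for the one differing sample; the analysis above handles precisely this case.

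The main obstacle is not the computation, which is elementary, but stating the setup cleanly: one must justify that $g_0$ and $g_1$ are fixed quantities independent of the secret label $y$ (so that the randomness is entirely in $u$ and the comparison between $y=0$ and $y=1$ is legitimate), and one must handle the boundary values $p=0$ and $p=\tfrac12$, together with the degenerate case $g_0=g_1$, so that the ratio $\frac{1-p}{p}$ and its logarithm are interpreted correctly (at $p=0$ the guarantee is vacuous with $\epsilon=\infty$, and at $p=\tfrac12$ it gives the strongest bound $\epsilon=0$).
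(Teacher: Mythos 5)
Your proof is correct and follows essentially the same route as the paper's: both compute the output distribution over the two-point support $\{g_0,g_1\}$ and bound the pointwise likelihood ratio by $\frac{1-p}{p}$ using $p\le\frac12$. Your additional care with the degenerate cases ($g_0=g_1$, $p=0$, $p=\frac12$) is a minor refinement the paper omits, but the argument is the same.
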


\paragraph{Remark.} It turns out that the discrete perturbation coincides with the randomized response mechanism used in~\cite{ggk+21}. Indeed, our definition of transcript DP is closely related to the concept of label DP~\citep{ggk+21}, because we require neighbouring datasets to differ at only one label. Consequently, transcript DP is a stronger notion of privacy in the sense that given any MPC protocol that is $(\epsilon,\delta)$-transcript DP, we can change it into a centralized $(\epsilon,\delta)$-label DP algorithm.
%It is an interesting question that if the converse is true, namely does there exist a generic reduction that can turn a label DP algorithm into a transcript DP distributed algorithm?
It is an interesting question that if the converse is true, namely does there exist a generic reduction that can turn a label DP algorithm into a distributed algorithm with transcript DP guarantee? %\kevin{move it to sec 2.2?}
%\Chong{What is transcript DP \textit{distributed} algorithm?}\xin{fixed}

\section{Differentially private split learning}
\label{sec:algorithm}

We now use \textsf{GradPerturb} as a building block to design a transcript differentially private split learning algorithm.

\subsection{Transcript Private Split Learning}
%Given a differentially private gradient-perturbation mechanism 
Our method \textsf{TPSL} (Transcript Private Split Learning) is presented in Algorithm~\ref{alg:dp_sgd}.
% It is very similar to the original split learning algorithm (Algorithm~\ref{alg:sgd}),
% so we highlight the major differences.
Compared to the original split learning algorithm (Algorithm~\ref{alg:sgd}),
\textsf{TPSL} perturbs not only the gradients $g_{b,i}$,
but also the model updates of \lp{}.
% Note that it is important to also perturb the model updates,
% otherwise $\theta_L$ would contain information on true labels and the privacy budget of \textsf{TPSL} will depend on the number of batches.
Note that we need to also perturb the model updates so that in our proof  we can avoid DP composition theorems~\citep{drgv10} that would lead to a DP budget that depends on the number of batches or the number of samples.
%\kevin{this seems non-trivial. any further explanation?} \kevin{Note that we need to also perturb the model updates so that in our proof 
%we can avoid DP composition theorems~\cite{} that would lead to a DP %budget that depends on the number of batches or the number of samples.
% on DP and we can obtain DP budget that does not depend on 
% the DP budget would not be added up due to the composition~\cite{}.
%}
%otherwise we will suffer from the sequential composition of DP, that is, with fixed.
%\Chong{What is ``suffer from sequential composition of DP''.}\xin{fixed}
\setlength{\algomargin}{15pt}
\begin{algorithm*}[!ht]
%\SetAlgoLined
 \caption{Transcript Private Split Learning}\label{alg:dp_sgd}
\KwIn{Features $X$ held by \np. Labels $Y$ held by \lp. $|X|=|Y|=n$.}
\KwPar{Number of batches $B$. Step size $\eta$. Loss function $\ell:[0,1]\times \{0,1\}\rightarrow \mathbb{R}_{\geq 0}$. Distribution $U$ over $\mathbb{R}$.}
Training samples are divided into $B$ batches.\\
$\theta_N,\theta_L$ are initialized with Gaussian initialization.\\
\For{$b=1,2,\cdots,B$}{
\tcp{Forward phase}
\np{} computes embeddings $e_{b,i}:=f_{\theta_N}(\mathbf{x}_{b,i})$ for $i=1,\cdots,n/B$.\\% where $\mathbf{x}_{b,1},\mathbf{x}_{b,2},\cdots, \mathbf{x}_{b,n/B}$ are the features of samples in batch $b$.\\
\np{} sends $\{e_{b,1},\cdots,e_{b,n/B}\}$ to \lp.\\
\tcp{Backward phase}
%\lp{} computes the prediction of the neural network $h$ as $\tilde y_{b,i} := h_{\theta_L}(f_{\theta_N}(\mathbf{x}_{b,N}))$ for $i=1,\cdots,n/B$.\\
% \lp{} computes losses $\ell_{b,i,j} := \ell(\tilde y_{b,i}, j)$ for $i=1,\cdots,n/B$ and $j\in \{0,1\}$.\\% where $y_{b,1},y_{b_2},\cdots,y_{b,n/B}$ ae the labels in batch $b$.\\
% \lp{} computes gradients $g_{b,i,j} := \frac{\partial \ell_{b,i,j}}{\partial e_{b,i}}$ and $\tilde g_{b,i}:=\textsf{GradPerturb}(y_{b,i},g_{b,i,0},g_{b,i,1})$ for $i=1,\cdots,n/B$ and $j\in {0,1}$.\\
\For{$i=1,\cdots,n/B$}{
\lp{} computes the prediction of the neural network $h$ as $\tilde y_{b,i} := h_{\theta_L}(f_{\theta_N}(\mathbf{x}_{b,N}))$.\\
\For{$j\in \{0,1\}$}{
\lp{} computes losses $\ell_{b,i,j} := \ell(\tilde y_{b,i}, j)$.\\
\lp{} computes derivatives $g_{b,i,j} := \frac{\partial \ell_{b,i,j}}{\partial e_{b,i}}$ and $u_{b,i,j}:=\frac{\partial \ell_{b,i,j}}{\partial \theta_L}$.
}
\lp{} computes $\tilde g_{b,i}:=\textsf{GradPerturb}(y_{b,i},g_{b,i,0},g_{b,i,1})$
and $\tilde u_{b,i}:=\textsf{GradPerturb}(y_{b,i},u_{b,i,0},u_{b,i,1})$ using Algorithm~\ref{alg:noisy_gradient}.
}
\lp{} sends $\{
\tilde g_{b,1},\cdots,\tilde g_{b,n/B}\}$ to \np.\\
\tcp{Model updates}
\lp{} updates $\theta_L$ by $\theta_L\leftarrow \theta_L- \eta\cdot \frac{B}{n}\cdot \sum_{i=1}^{n/B}\tilde u_{b,i}$.\\
\np{} computes $\frac{\partial \ell_{b,i}}{\partial \theta_N} = \tilde g_i^\top \frac{\partial e_{b,i}}{\theta_N}$ and then  updates $\theta_N$ by $\theta_N\leftarrow \theta_N- \eta\cdot \frac{B}{n}\cdot \sum_{i=1}^{n/B}\frac{\partial \ell_{b,i}}{\partial \theta_N}$.\\
}
\end{algorithm*}

The privacy of \textsf{TPSL} is given by the following lemma:
\begin{lemma}\label{lem:dp_main}
Fix $\epsilon>0$. If \textsf{GradPerturb} is $(\epsilon,0)$-DP under the distribution $U$,
then \textsf{TPSL} is $(2\epsilon,0)$-transcript DP.
\end{lemma}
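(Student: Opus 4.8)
The plan is to reduce the privacy of the entire $B$-round protocol to a single pair of \textsf{GradPerturb} invocations, exploiting the fact that \textsf{TPSL} perturbs both the transmitted gradient and the local model update. First I would fix neighboring datasets $D,D'$ that differ in exactly one label, say the label $y_{b^*,i^*}$ of the $i^*$-th sample in batch $b^*$, and make precise what the adversary observes: the forward embeddings $\{e_{b,i}\}$ and the perturbed gradients $\{\tilde g_{b,i}\}$ exchanged across all rounds (and, under the white-box model that motivates the design, the evolving parameters $\theta_L$). The goal is to show the joint law of all these quantities is $(2\epsilon,0)$-close between $D$ and $D'$.

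The heart of the argument is a structural claim, proved by induction on the batch index $b$: every quantity in the transcript depends on the swapped label \emph{only} through the two perturbed outputs $\tilde g_{b^*,i^*}$ and $\tilde u_{b^*,i^*}$ produced in round $b^*$. For the rounds up to $b^*$, the embedding $e_{b^*,i}=f_{\theta_N}(\mathbf{x}_{b^*,i})$ is computed from a copy of $\theta_N$ updated using only gradients from rounds before $b^*$, and the per-label derivatives $g_{b^*,i,0},g_{b^*,i,1},u_{b^*,i,0},u_{b^*,i,1}$ are computed for \emph{both} candidate labels, so none of them sees $y_{b^*,i^*}$; hence the only round-$b^*$ outputs touched by the swap are $\tilde g_{b^*,i^*}$ and $\tilde u_{b^*,i^*}$. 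For the inductive step, note that $\theta_N$ after round $b^*$ is a deterministic function of the round-$b^*$ perturbed gradients (only $\tilde g_{b^*,i^*}$ among them carries the swap), while $\theta_L$ after round $b^*$ is a deterministic function of the $\tilde u_{b^*,\cdot}$; consequently every later embedding, every later pair $g_{b,i,0},g_{b,i,1}$, and hence every later $\tilde g_{b,i}$, is a function of $(\tilde g_{b^*,i^*},\tilde u_{b^*,i^*})$ together with label-independent data and fresh independent noise. This is precisely why the model update is perturbed: it severs the direct channel by which $y_{b^*,i^*}$ would otherwise enter $\theta_L$ and then leak into \emph{all} subsequent rounds, which is what would have forced a composition over the $B$ batches and a $B$-dependent budget.

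Granting the structural claim, I would finish in two standard steps. Because the inputs $g_{b^*,i^*,0},g_{b^*,i^*,1}$ and $u_{b^*,i^*,0},u_{b^*,i^*,1}$ are fixed and label-independent, the hypothesis that \textsf{GradPerturb} is $(\epsilon,0)$-DP in its label argument gives that $\tilde g_{b^*,i^*}$ alone and $\tilde u_{b^*,i^*}$ alone are each $(\epsilon,0)$-DP in $y_{b^*,i^*}$. Since the two invocations draw independent samples $u\sim U$, basic composition shows the pair $(\tilde g_{b^*,i^*},\tilde u_{b^*,i^*})$ is $(2\epsilon,0)$-DP; note the factor $2$ is the number of perturbations per sample, not the number of batches. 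Finally, the structural claim exhibits the whole transcript as a randomized post-processing of this pair, the extra randomness being the independent noise draws of all other \textsf{GradPerturb} calls together with the fixed features and unchanged labels, so post-processing invariance yields that $\trans{\textsf{TPSL}}$ is $(2\epsilon,0)$-transcript DP.

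I expect the main obstacle to be the inductive structural claim, specifically making rigorous the assertion that no ``second channel'' of label dependence exists. One must verify that the per-label derivatives for sample $(b^*,i^*)$ are genuinely independent of the swapped label (they are, since \textsf{TPSL} computes them for both candidate labels before perturbing), and that conditioning on $(\tilde g_{b^*,i^*},\tilde u_{b^*,i^*})$ really renders the remainder of the transcript a fixed function of label-independent randomness. Packaging all downstream dependence into a single post-processing map, so that no per-round composition sneaks back in, is where the bookkeeping is delicate.
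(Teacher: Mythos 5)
Your proof is correct and follows essentially the same route as the paper's: both isolate the critical batch $b^*$, observe that everything before it is label-independent and everything after it is determined by the perturbed pair $(\tilde g_{b^*,i^*},\tilde u_{b^*,i^*})$ together with label-independent randomness, and charge $2\epsilon$ for the two \textsf{GradPerturb} calls on that sample. The paper carries this out as an explicit pointwise probability-ratio computation over the decomposition into $S_{-}$, $S_{*}$, $S_{+}$, whereas you package the identical reduction as basic composition followed by post-processing, which makes the origin of the factor $2$ slightly more explicit but is not a different argument.
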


\begin{proof}
The transcript of \textsf{TPSL} is $S_t:=\{e_{b,i}\}_{b\in[B],i\in[n/B]}\cup \{\tilde g_{b,i}\}_{b\in[B],i\in[n/B]}$.
% We further consider the model weights at the beginning of the $B$ batches,
% which we denote as $S_m:=\{\theta_N^{(b)},\theta_L^{(b)}\}_{b=1}^B$.
We further consider the model updates in the $B$ batches,
which we denote as $S_m:=\{\Delta \theta_N^{(b)},\Delta \theta_L^{(b)}\}_{b=1}^B$.
Then $S_t,S_m$ are random functions of input dataset $D$.
We denote the output of $S_t$ and $S_m$ with input $D$ by $(S_t,S_m)(D)$.

Next we prove that for any neighbouring $D,D'$ and any assignment $s$, we have
\begin{align}\label{eq:dp_goal}
    \Pr[(S_t,S_m)(D)=s]\leq e^{2\epsilon} \cdot \Pr[(S_t,S_m)(D')=s].
\end{align}
Let $(b^*,i^*)$ be the index of the different label in $D$ and $D'$,
i.e., $y_{b^*,i^*}=1-y_{b^*,i^*}'$.
Then in batch $1,2,\cdots,b^*-1$,
$D$ and $D'$ are identical, hence the probability restricted on the first $b^*-1$ batches is the same.
Formally,
let $S_{-}$ be the parts of $S_t,S_m$ that are in the first $b^*-1$ batches, 
% and for any set $S$\xin{maybe move to notation/preliminary section}, let $[\cdot]_{S}$ be the restriction on $S$,
then we have
{\scriptsize
\begin{align*}
    \Pr\left[[(S_t,S_m)(D)]_{S_{-}}=[s]_{S_{-}}\right]=\Pr\left[[(S_t,S_m)(D')]_{S_{-}}=[s]_{S_{-}}\right].
\end{align*}
}

%\JK{any introduction about the notation of $[*]_{S_{-}}$} \xin{in Sec 2.1}

Furthermore, if $S_t$ and $S_m$ are the same for the first $b^*$ batches,
then the model weights after the $b^*$-th batch will be the same for $D$ and $D'$,
because all previous model updates are the same.
Since the rest training data is identical in $D$ and $D'$, %and they start with the same model weights,
the remaining part in $(S_t,S_m)$ will also be identical.
Formally,
define $S_*$, $S_{+}$ as
% {\small
% \begin{align*}
%     S_{*}= & ~\{e_{b^*,i}\}_{i\in\{1,2,\cdots, n/B\}}\cup  \{\tilde g_{b^*,i}\}_{i\in\{1,2,\cdots, n/B\}}\cup\{\Delta\theta_N^{(b^*)},\Delta\theta_L^{(b^*)}\},\\
%     S_{+}= & ~\{e_{b,i}\}_{b \in\{b^*+1,\cdots,B\},i\in\{1,2,\cdots, n/B\}}\cup \{\tilde g_{b,i}\}_{b \in\{b^*+1,\cdots,B\},i\in\{1,2,\cdots, n/B\}}\\
%      &\cup\{\Delta\theta_N^{(b)},\Delta\theta_L^{(b)}\}_{b \in\{b^*+1,\cdots,B\}}.\\
% \end{align*}
% }
{\small
\begin{align*}
    S_{*}= & ~\{e_{b^*,i}\}_{i\in [1, n/B]}\cup  \{\tilde g_{b^*,i}\}_{i\in [1, n/B]}\cup\{\Delta\theta_N^{(b^*)},\Delta\theta_L^{(b^*)}\},\\
    S_{+}= & ~\{e_{b,i}\}_{b \in[b^*+1,B],i\in [1, n/B]}\cup \{\tilde g_{b,i}\}_{b \in [b^*+1,B],i\in [1, n/B]}\\
     &\cup\{\Delta\theta_N^{(b)},\Delta\theta_L^{(b)}\}_{b \in [b^*+1,B]}.\\
\end{align*}
}
Then we have
{\scriptsize
\begin{align*}
    &~\Pr\left[[(S_t,S_m)(D)]_{S_{+}}=[s]_{S_{+}}\Big\vert [(S_t,S_m)(D)]_{S_{-}\cup S_{*}}=[s]_{S_{-}\cup S_{*}} \right]\\
    = & ~\Pr\left[[(S_t,S_m)(D')]_{S_{+}}=[s]_{S_{+}}\Big\vert [(S_t,S_m)(D')]_{S_{-}\cup S_{*}}=[s]_{S_{-}\cup S_{*}} \right].
    %=\Pr\left[[(S_t,S_m)(D')]_{S_{+}}=[s]_{S_{+}}\right]
\end{align*}
}
So we conclude that
{\small
\begin{align*}
    &~\frac{\Pr[(S_t,S_m)(D)=s]}{\Pr[(S_t,S_m)(D')=s]}\\
    = & ~\frac{\Pr\left[[(S_t,S_m)(D)]_{S_{*}}=[s]_{S_{*}}\Big\vert [(S_t,S_m)(D)]_{S_{-}}=[s]_{S_{-}} \right]}{\Pr\left[[(S_t,S_m)(D')]_{S_{*}}=[s]_{S_{*}}\Big\vert [(S_t,S_m)(D')]_{S_{-}}=[s]_{S_{-}} \right]}
\end{align*}
}
The RHS is concerning the ratio of the probability that the transcript and model updates in the $b^*$-th batch are the same.
Let $g$ be the value of $\tilde g_{b^*,i^*}$ in $s$,
and $u$ be the value of $\tilde u_{b^*,i^*}$ in $s$.
Let $\mathcal{E}_{D,g,u}$ be the event of $\textsf{GradPerturb}(y_{b^*,i^*},g_0,g_1)=g,\textsf{GradPerturb}(y_{b^*,i^*},u_0,u_1)=u$,
and $\mathcal{E}_{D',g,u}$ be the event of 
$\textsf{GradPerturb}(1-y_{b^*,i^*},g_0,g_1)=g,\textsf{GradPerturb}(1-y_{b^*,i^*},u_0,u_1)=u$.
Since all the training samples except for the $i^*$-th one is the same in $D$ and $D'$,
we have
{\small
\begin{align*}
    &~\frac{\Pr\left[[(S_t,S_m)(D)]_{S_{*}}=[s]_{S_{*}}\Big\vert [(S_t,S_m)(D)]_{S_{-}}=[s]_{S_{-}} \right]}{\Pr\left[[(S_t,S_m)(D')]_{S_{*}}=[s]_{S_{*}}\Big\vert [(S_t,S_m)(D')]_{S_{-}}=[s]_{S_{-}} \right]} \\
    = & ~\frac{\Pr[\mathcal{E}_{D,g,u}]}{\Pr[\mathcal{E}_{D',g,u}]}.
\end{align*}
}
Since $\textsf{GradPerturb}$ is $(\epsilon,0)$-DP with the distribution $U$,
we have
\begin{align*}
\frac{\Pr[\mathcal{E}_{D,g,u}]}{\Pr[\mathcal{E}_{D',g,u}]}
  \leq e^{2\epsilon},
\end{align*}
which completes the proof of  Eq.~\eqref{eq:dp_goal}.

Then we show $(S_t,S_m)$ is $(2\epsilon,0)$-DP.
For any subset $S$ of possible assignments of $(S_t,S_m)$,
we have
{\small
\begin{align*}
    \Pr[(S_t,S_m)(D)\in S] = &~ \int_{s\in S} \Pr[(S_t,S_m)(D)=s] ds\\
    \leq &~ \int_{s\in S} e^{2\epsilon}\Pr[(S_t,S_m)(D')=s] ds\\
    = & ~e^{2\epsilon}\Pr[(S_t,S_m)(D')\in S],
\end{align*}
}
where the second step uses Eq.~\eqref{eq:dp_goal}.
Hence $(S_t,S_m)$ is $(2\epsilon,0)$-DP.
As a consequence, \textsf{TPSL} is $(2\epsilon,0)$-transcript DP,
which completes the proof of the lemma.
% We can conclude that Eq.~\eqref{eq:dp_goal} is correct,
% as
% \begin{align*}
%         \Pr[(S_t,S_m)(D)=s]\leq & ~ \Pr[(S_t,S_m)(D')=s]\cdot(e^{\epsilon}  +\frac{\delta}{\Pr[\textsf{GradPerturb}(g_{1-y_{b^*,i^*}})=g]})\\
%         \leq  & ~e^{\epsilon} \cdot\Pr[(S_t,S_m)(D')=s]+\delta.
% \end{align*}
% Here we use the fact that $\Pr[(S_t,S_m)(D')=s]\leq \Pr[\textsf{GradPerturb}(g_{1-y_{b^*,i^*}})=g]$,
% because when the former even occurs, the latter event has to occur. 
%So we complete the proof of the lemma.
\end{proof}

%\JK{what's the motivation to show the detailed proof here?}\xin{1) it is main theoretical result 2) it is related to the following remark.}

Note that the privacy budget of \textsf{TPSL} only depends on the budget of \textsf{GradPerturb},
but not on the number of samples $n$ or number of batches $B$. %\kevin{moved from the section beginning.}

\subsection{Improve privacy budget analysis}\label{sec:dp_improve}
From the above proof, we see that if we can couple the generation of $\tilde g_{b,i}$ and $\tilde u_{b,i}$,
then it is possible to further tighten the bound of privacy budget.
In the setting of neural networks, we can achieve this by adding noise on the final hidden layer.
Formally, let $\theta^*$ be the parameters of the last hidden layers of the sub-network $h_{\theta_L}(\cdot)$ held by \lp{},
then we can rewrite $h_{\theta_L}(\cdot)$ as $h_1(\theta^*,h_2(\bar \theta_L,\cdot))$ where $\bar\theta_L$ is the remaining parameters in $\theta_L$.
From the chain rule of derivatives,
we have
\begin{align*}
    g_{b,i} =& \frac{\partial \ell_{b,i}}{\partial e_{b,i}} = \frac{\partial \ell(h_1(\theta_*,h_2(\bar\theta_L,e_{b,i})),y)}{\partial e_{b,i}} = \frac{\partial \ell_{b,i}}{\partial h_1}\frac{\partial h_1}{\partial e_{b,i}},\\
        u_{b,i} = & \frac{\partial \ell_{b,i}}{\partial \theta_L} = \frac{\partial \ell(h_1(\theta_*,h_2(\bar\theta_L,e_{b,i})),y)}{\partial \theta_L} = \frac{\partial \ell_{b,i}}{\partial h_1}\frac{\partial h_1}{\partial \theta_L}.
\end{align*}
Hence we can apply \textsf{GradPerturb} once on $\frac{\partial \ell_{b,i}}{\partial h_1}$, i.e.,
\lp{} computes $v_{b,i,j}:=\frac{\partial \ell_{b,i,j}}{\partial h_1}$ for $j\in \{0,1\}$ and 
$\tilde v_{b,i}=\textsf{GradPerturb}(y_{b,i},v_{b,i,0},v_{b,i,1})$.
Then \lp{} computes $\tilde g_{b,i}=\tilde v_{b,i}\frac{\partial h_1}{\partial e_{b,i}}$ and $\tilde u_{b,i}=\tilde v_{b,i}\frac{\partial h_1}{\partial \theta_L}$.
With similar proof,
now we can show that when \textsf{GradPerturb} is $(\epsilon,0)$-DP,
we have
{\scriptsize
\begin{align*}
    \frac{\Pr\left[[(S_t,S_m)(D)]_{S_{*}}=[s]_{S_{*}}\Big\vert [(S_t,S_m)(D)]_{S_{-}}=[s]_{S_{-}} \right]}{\Pr\left[[(S_t,S_m)(D')]_{S_{*}}=[s]_{S_{*}}\Big\vert [(S_t,S_m)(D')]_{S_{-}}=[s]_{S_{-}} \right]}\leq e^{\epsilon}.
\end{align*}
}
To summarize, by slightly modifying Algorithm~\ref{alg:dp_sgd},
we can improve the bound on privacy budget to $(\epsilon,0)$ from $(2\epsilon,0)$. 
%\kevin{which one is the proposed solution? if the improved version is strictly better than the previous one, why don't we directly introduce this version as our proposed method?}
\section{Experiments}
In this section, we report the experiment results of split learning algorithms. We train a modified Wide\&Deep model \citep{ckh+16} on two real-world large-scale datasets, Avazu~\citep{avazu} and Criteo~\citep{criteo}.
The model $f$ held by the non-label party \np{} is the embedding layer for input features plus two 128-unit ReLU activated multi-layer perceptron (MLP) layers. 
The model $h$ held by the label party \lp{} is two layers of 128-unit ReLU activated MLP.
We use SGD with batch size $8,192$ for training.

\subsection{Label leakage in split learning}\label{sec:label_attack_split_learning}
We first show that the naive implementation of split learning, for example Algorithm~\ref{alg:sgd},
does leak label information, which verify the findings of \cite{lsy+21}.
We consider three kinds of attack: norm attack (\texttt{NA}), spectral attack (\texttt{SA}) and shortest distance attack (\texttt{SDA}). Norm attack ~\citep{lsy+21} is a simple heuristic for black-box attack,
and has been shown to be effective on imbalanced datasets like Avazu and Criteo.
Spectral attack~\citep{tlm18,syy+21} is a 2-clustering algorithm that works as a black-box attack.
More details about these attacks can be found in Appendix~\ref{sec:black_box_attack}.
Shortest distance attack is the white-box attack introduced in Section~\ref{sec:white_box_attack}.
%\xin{we can include full description of attacks in the appendix}

We use \emph{attack AUC} for the evaluation of these attacks. Detailed description of attack AUC is in Appendix~\ref{sec:attack_auc}. 
Attack AUC close to 1 means the attack can almost perfectly reconstruct the labels,
while close to 0.5 attack AUC means the attack behaves similarly to random guess.

We evaluate these attacks on Algorithm~\ref{alg:sgd} with the Avazu dataset and report the results in Table~\ref{tab:label_leakage}.
For reference we also report the test AUC.
We can clearly see that all three attacks can recover the true label almost perfectly.
Furthermore, white-box attack (i.e. shortest distance attack) achieves better performance compared to black-box attacks.

\begin{table}[ht!]
\centering
\begin{tabular}{|l|l|l|l|} 
\hline
           test AUC & \texttt{NA} AUC & \texttt{SA} AUC & \texttt{SDA} AUC  \\ 
\hline
%  0.7523   & 0.989747        & 0.988988            & 1.000000                      \\
% kevin: keep the same # of digits.
  0.7523   & 0.9897        & 0.9890            & 1.0000                      \\
\hline
\end{tabular}
\caption{Attacks on non-private split learning on Avazu.}\label{tab:label_leakage}
\vspace{-6.05mm}
\end{table}

% \begin{table*}[!ht]\centering
% \caption{AUC of different attacks on Avazu.}\label{tab:label_leakage}\scriptsize
% \begin{tabular}{c|c}\toprule
% & \midrule
% AUC &
% \bottomrule
% \end{tabular}
% \end{table*}

\subsection{Results of transcript DP split learning}\label{sec:exp_dp_split}
%\xin{explain gaussian with more details on gradperturb}
We evaluate the performance of the two approaches we propose: Laplace perturbation (\texttt{Laplace}) and discrete perturbation (\texttt{Discrete}).
For different choices of $\epsilon$,
we report test performance as the utility metric.
We also report the attack AUC for the attacks introduced in Section~\ref{sec:label_attack_split_learning}.
The results on Avazu are presented in Table~\ref{tab:dp_algo_results},
and the results on Criteo are are presented in Table~\ref{tab:experimental_results_criteo}.
For comparison,
we also include a baseline of isotropic Gaussian perturbation (\texttt{Gaussian}). 
That is,
we use $\textsf{GradPerturb}(y,g_0,g_1)=g_y+r$ where $r$ is an isotropic Gaussian vector.
Notice that Gaussian mechanism cannot achieve pure DP (i.e. $\delta=0$),
hence our framework of Lemma~\ref{lem:dp_main} does not apply and we do not report the value of $\epsilon$ for Gaussian perturbation.
We apply $\textsf{GradPerturb}$ on the final hidden layer as described in Section~\ref{sec:dp_improve}.

% All the above approaches are based on gradient perturbation. As we mentioned in Section~\ref{sec:grad_perturb_proof},
% transcript DP is closely related to label DP.
% Inspired by the Randomized Response mechanism used in~\cite{ggk+21,mmp+21},
% we also include a \emph{label perturbation} method which we name as \texttt{Label RR}. 
% In the binary case,
% \texttt{Label RR} simply flips each label with probability determined by the privacy budget $\epsilon$ and then uses the flipped labels for training.\xin{include a proof in appendix for completeness}

We can see that shortest distance attack is more informative than the other two black-box attacks.
For instance,
when $\epsilon = 1$,
the attack AUC of norm attack and spectral attack of Laplace perturbation is about $0.57$,
which seems to be low.
However, this is a false sense of security: the shortest distance attack has attack AUC around $0.7$.

{\tiny
\begin{table*}[t]
\centering
\resizebox{\columnwidth}{!}{
\begin{tabular}{|l|l|l|l|l|l|l|} 
\hline
\texttt{SDA} AUC level           & Algorithms & \texttt{SDA} AUC & Test AUC & $\epsilon$ & \texttt{NA} AUC & \texttt{SA} AUC  \\ 
\hline
\multirow{3}{*}{Low}    & \texttt{Laplace}   & 0.5237   & 0.6689   & 0.1        & 0.4932   & 0.4847    \\ 
\cline{2-7}
                        & \texttt{Discrete}   & 0.5274   & 0.6293   & 0.1        & 0.4997   & 0.4997    \\ 
\cline{2-7}
                        & \texttt{Gaussian}   & 0.5252 & 0.6657   & --         & 0.5117  & 0.4999  \\ 
\hline
\multirow{3}{*}{Medium} & \texttt{Laplace}    & 0.6961 & 0.7416   & 1          & 0.5784 & 0.5743  \\ 
\cline{2-7}
                        & \texttt{Discrete}   & 0.7789  & 0.7342   & 1          & 0.4806 & 0.4796  \\ 
\cline{2-7}
                        & \texttt{Gaussian}   & 0.6455 & 0.7384   & --         & 0.5440 & 0.5363 \\ 
\hline
\multirow{3}{*}{High}   & \texttt{Laplace}    & 0.9622  & 0.7518   & 10         & 0.9640 & 0.9966   \\ 
\cline{2-7}
                        & \texttt{Discrete}   & 0.9999 & 0.7523   & 10         & 0.9898 & 0.9893  \\ 
\cline{2-7}
                        & \texttt{Gaussian}   & 1        & 0.7522   & --         & 0.9902  & 0.9899  \\
\hline
\end{tabular}
}
\caption{Performance of different transcript DP split learning algorithms on Avazu.}\label{tab:dp_algo_results}
\end{table*}
}
%The utility is measured by test AUC.

\begin{table*}[t]
\centering
\resizebox{\columnwidth}{!}{
\begin{tabular}{|l|l|l|l|l|l|l|l}\hline
\texttt{SDA} AUC Level & Algorithms & \texttt{SDA} AUC &Test AUC &$\epsilon$ &\texttt{NA} AUC &\texttt{SA} AUC \\\hline
\multirow{3}{*}{Low} &\texttt{Laplace} &0.5236 &{0.7095} &0.1 &0.5011 &0.5011 \\\cline{2-7}
&\texttt{Discrete} &0.5258 &0.6037 &0.1 &0.5138 &0.5215 \\\cline{2-7}
&\texttt{Gaussian} &0.5264 &0.6614 &- &0.4985 &0.5017 \\ \hline
% \multirow{3}{*}{Low} &\texttt{Laplace} &0.60912 &\textbf{0.7539} &0.5 &0.516781 &0.516852 \\
% &\texttt{Discrete} &0.622877 &0.7178 &0.5 &0.606832 &0.620051 \\
% &\texttt{Gaussian} &0.574215 &0.7341 &8.00e-4 &0.499392 &0.501936 \\ \hline
\multirow{3}{*}{Medium} &\texttt{Laplace} &0.6964 &{0.7639} &1 &0.5525 &0.5525 \\\cline{2-7}
&\texttt{Discrete} &0.7310 &0.7497 &1 &0.7317 &0.7284 \\\cline{2-7}
&\texttt{Gaussian} &0.6409 &0.7550 &- &0.5017 &0.5031 \\ \hline
% \multirow{3}{*}{High} &\texttt{Laplace} &0.763589 &\textbf{0.7685} &1.5 &0.593944 &0.593767 \\
% &\texttt{Discrete} &0.817355 &0.7584 &1.5 &0.818222 &0.817647 \\
% &\texttt{Gaussian} &0.747987 &0.765 &2.00e-4 &0.509261 &0.562712 \\ \hline
% \multirow{3}{*}{Slightly High} &\texttt{Laplace} &0.888038 &\textbf{0.7745} &3 &0.702666 &0.702794 \\
% &\texttt{Discrete} &0.953027 &0.771 &3 &0.953202 &0.953202 \\
% &\texttt{Gaussian} & 0.893549 & 0.7718 & 1.00e-4 &0.534681 &0.694740 \\ \hline
\multirow{3}{*}{High} &\texttt{Laplace} &0.9965 &0.7784 &10 &0.8761 &0.8766 \\\cline{2-7}
&\texttt{Discrete} &0.9999 &{0.7792} &10 &0.9202 &0.9204 \\\cline{2-7}
&\texttt{Gaussian} &1.0 &0.7786 &- &0.9038 &0.9159 \\
\hline
\end{tabular}
}
\caption{Performance of different transcript-DP split learning algorithms on
Criteo.}\label{tab:experimental_results_criteo}
\end{table*}

% Overall, on similar levels of privacy, the rank of utility (test AUC) is \texttt{Laplace}>\texttt{Label RR}>\texttt{Gaussian}>\texttt{Discrete}.
% % To better illustrate the utility-privacy trade-off,
% % we plot the curve of utility (measured by test AUC) vs. privacy (measured by shortest distance attack AUC) in Figure~\ref{fig:avazu_tradeoff}.
% To better illustrate the utility-privacy trade-off,
% we plot \texttt{Laplace} vs \texttt{Gaussian} in Figure~\ref{fig:avazu_tradeoff} (a)\xin{TBD}
To better illustrate the utility-privacy trade-off,
we plot the curve of utility (measured by test AUC) vs. privacy (measured by privacy budget $\epsilon$) in Figure~\ref{fig:avazu_tradeoff} (a)(c) for \texttt{Laplace} and \texttt{Discrete}. 
%Notice that Lemma~\ref{lem:dp_main} does not work with Gaussian perturbation,
%because Gaussian mechanism has $\delta>0$.
To compare with \texttt{Gaussian},
we use \texttt{SDA} as the privacy metric and draw Figure~\ref{fig:avazu_tradeoff} (b)(d).

We observe that \texttt{Laplace} and \texttt{Gaussian} have similar
utility-privacy trade-off curves,
% This confirms Observation~\ref{obs:optimal_direction}\xin{to be fixed},
% that is,
% only noise on the direction of $g_0-g_1$ is useful for improving label privacy.
% This confirms out insight that in order to protect label privacy,
which confirms the insight that to protect label privacy, it is sufficient to add noise in the direction of $g_0-g_1$.
Furthermore,
\texttt{Laplace} achieves good utility within reasonable range of $\epsilon$.
Compared to the non-private baseline,
when $\epsilon = 1$,
the test AUC only drops by $1.4\%$.

We also observe that \texttt{Discrete} has worse utility-privacy trade-off than \texttt{Laplace} and \texttt{Gaussian}.
One explanation is that \texttt{Discrete} generates biased estimators, namely $\E[\textsf{GradPerturb}(y,g_0,g_1)]\neq g_y$ for \texttt{Discrete}. However,
the other two methods provide unbiased gradient estimator.

\begin{figure*}[ht!]
% \vspace{-0.1in}
      \begin{minipage}[t]{0.5\linewidth}
  \centering
    \includegraphics[width=0.8\linewidth]{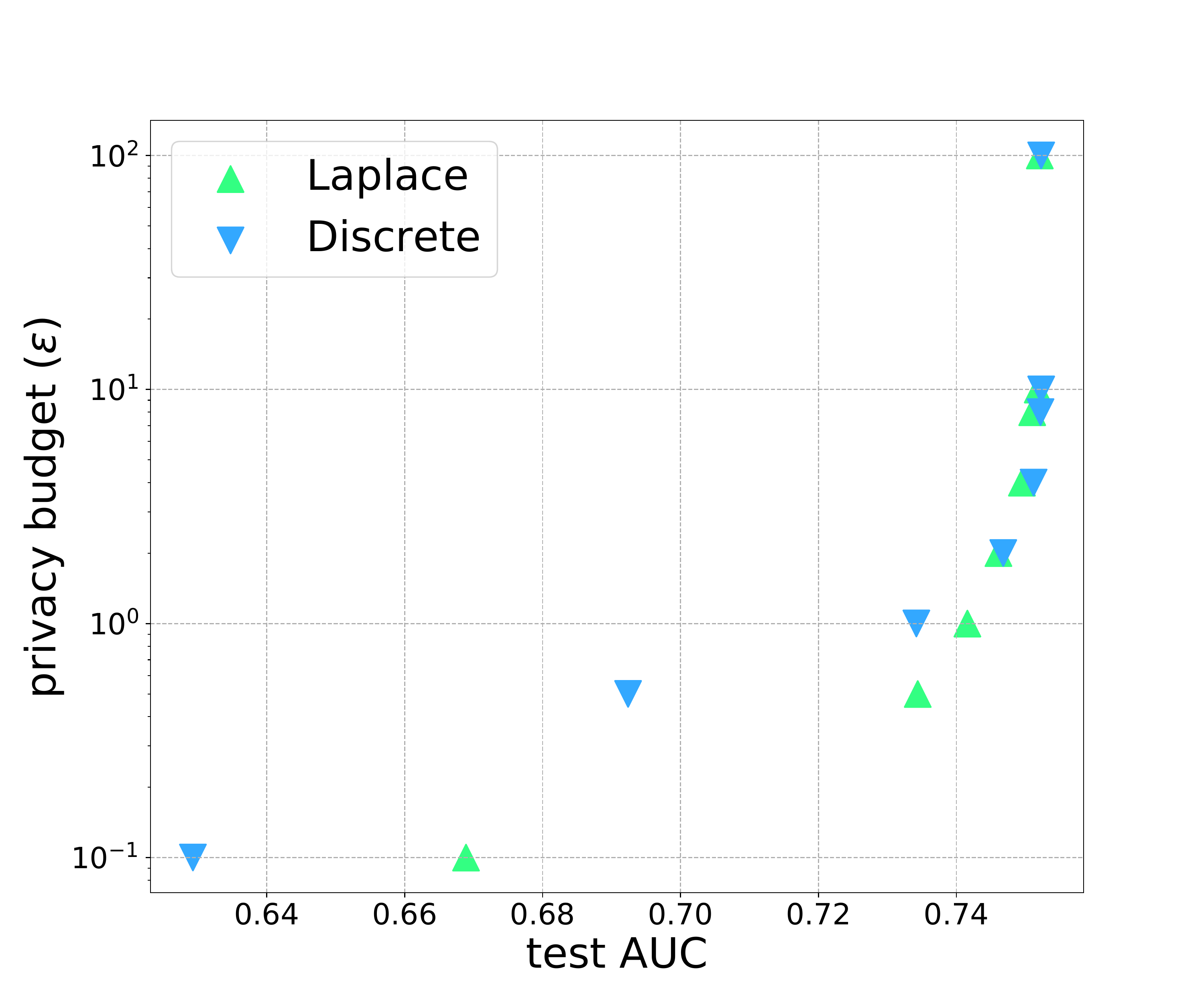}
      \caption*{(a): Trade-off of transcript DP algorithms\\ on Avazu.}
  \end{minipage}
    \begin{minipage}[t]{0.5\linewidth}
  \centering
    \includegraphics[width=0.8\linewidth]{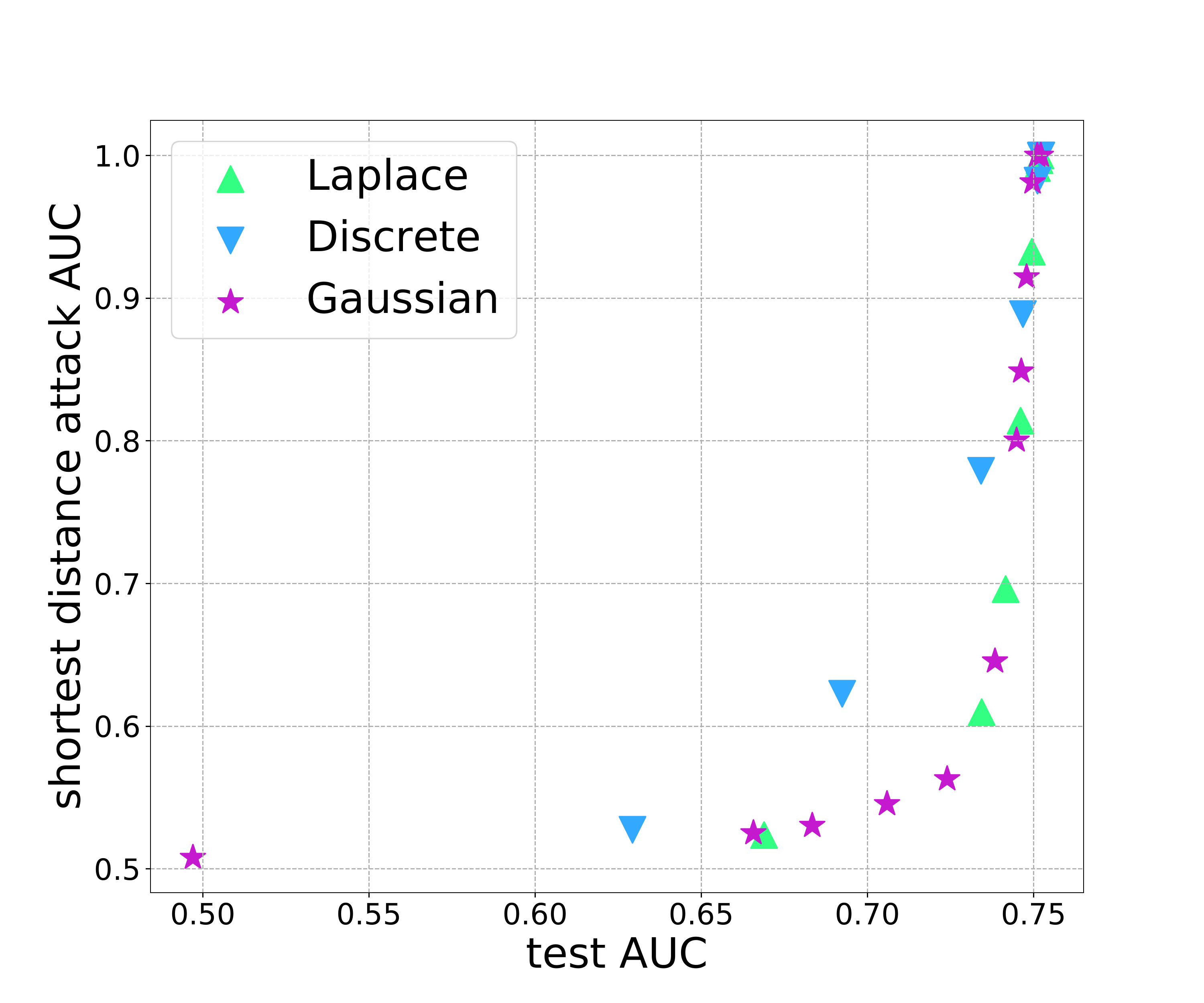}
      \caption*{(b):Trade-off measured by shortest attack AUC on Avazu.}
  \end{minipage}
      \begin{minipage}[t]{0.5\linewidth}
  \centering
    \includegraphics[width=0.8\linewidth]{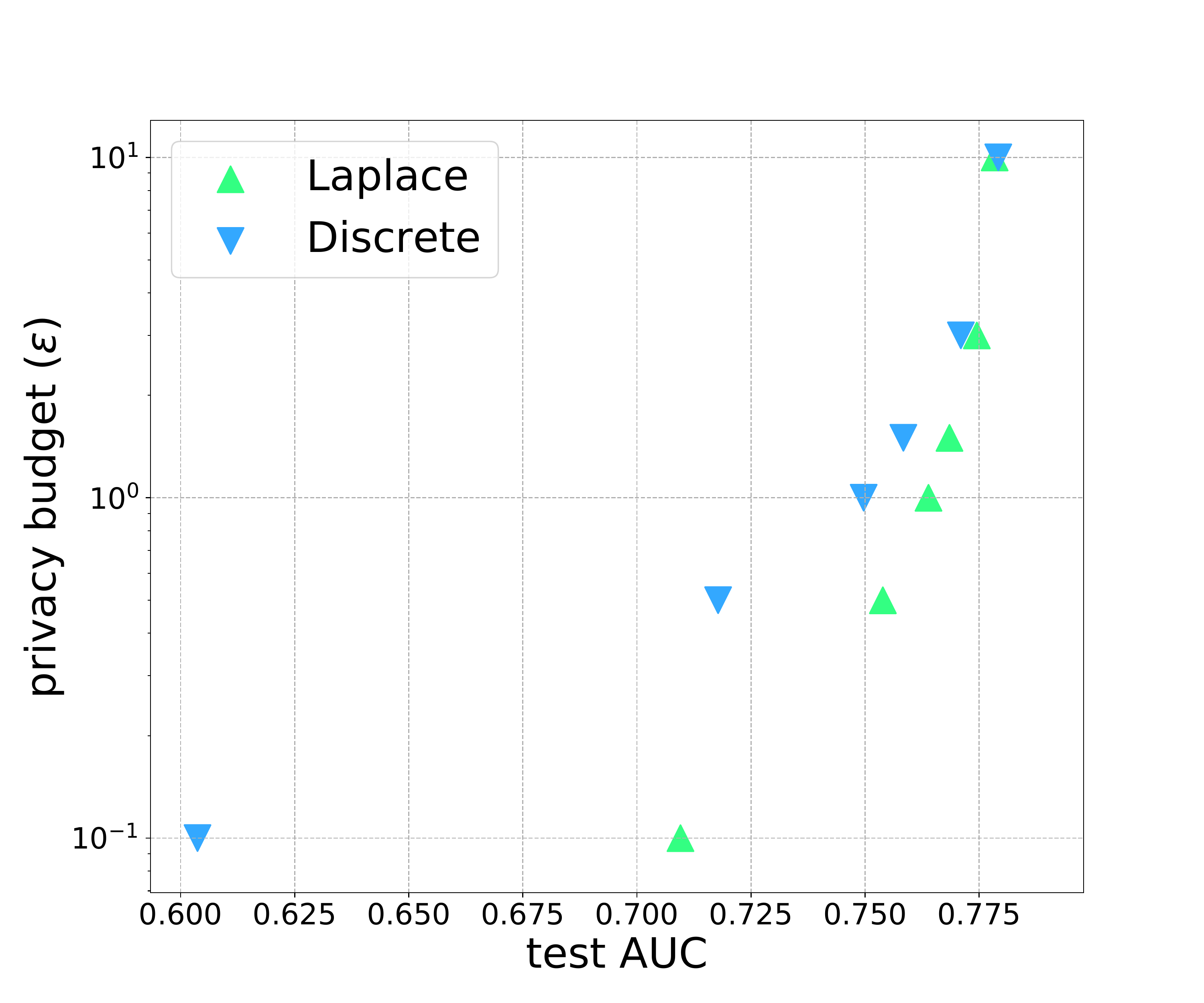}
      \caption*{(c): Trade-off of transcript DP algorithms\\ on Criteo.}
  \end{minipage}
    \begin{minipage}[t]{0.5\linewidth}
  \centering
    \includegraphics[width=0.8\linewidth]{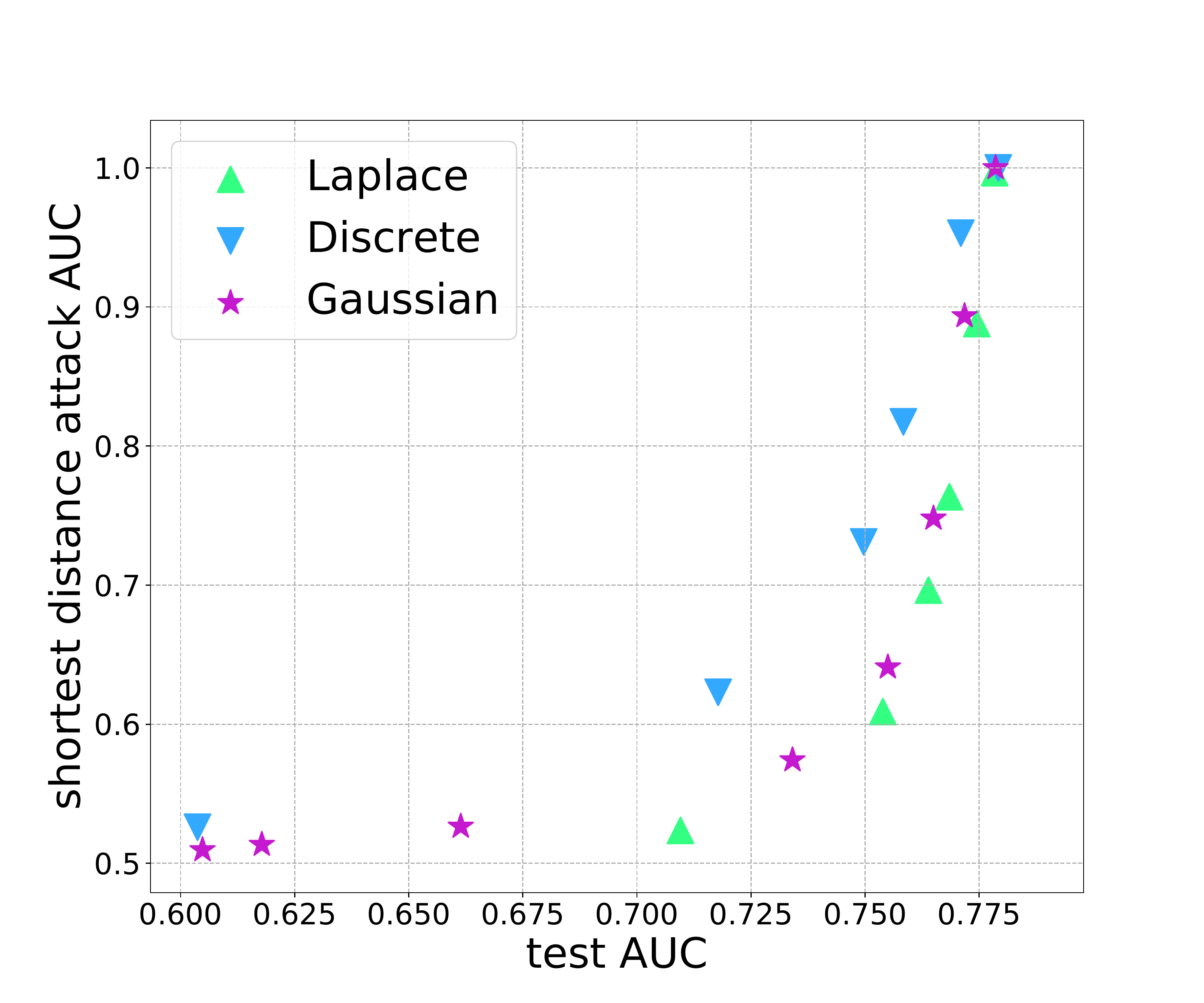}
      \caption*{(d):UTrade-off measured by shortest attack AUC on Criteo.}
  \end{minipage}
      \caption{Utility-privacy trade-off.}\label{fig:avazu_tradeoff}
  \end{figure*}

% We can clearly see that the curve of \texttt{Laplace} is on the right of the curve of \texttt{Gaussian}, which means \texttt{Laplace} achieves a better utility-privacy trade-off: on the same level of attack AUC, \texttt{Laplace} has better utility (test AUC) than \texttt{Gaussian}.
% This aligns with Observation~\ref{obs:optimal_direction}, that is,
% \texttt{Laplace} only adds necessary noise so that utility is minimally impacted.

% We also observe that \texttt{Discrete} has significant worse performance. One possible explanation is that discrete perturbation is biased.
% Indeed,
% let $\tilde g = \textsf{GradPerturb}(g_y)$,
% then we have $\tilde g = (1-u)\cdot g_y + u \cdot g_{1-y}$.
% For Laplace perturbation and Gaussian perturbation,
% we have $\E_u[\tilde g]=g_y$,
% namely we are still using unbiased gradients for model updates.
% However,
% for discrete perturbation we have $\E_u[\tilde g]=(1-p)\cdot g_y+p\cdot g_{1-y}$ where $p$ is the parameter for Bernoulli distribution used by discrete perturbation.
% Hence the drop of performance may come from the bias in gradients.

\section{Extensions}
\subsection{Go beyond online learning}

Our learning algorithm \textsf{TPSL} is an online training algorithm~\citep{dhs11} in the sense that data is accessed in a sequential order.
Online learning is widely used in practice to train neural networks due to its scalability and efficiency.
Our proof of differential privacy makes use of the property of online learning that each training sample is used only once in the entire procedure. %\kevin{it's an important assumption. we need to say this early.}

However, our perturbation-based methods can also be used even if samples can be used multiple times, such as in multi-epoch training.
The strategy of the label party \lp{} needs to be modified slightly:
when \lp{} calls \textsf{GradPerturb} for one training sample for the first time, \lp{} needs to record the random scalar generated by \textsf{GradPerturb}.
Whenever the same sample is used again,
instead of running a fresh \textsf{GradPerturb},
\lp{} just uses the recorded random scalar to generate the perturbed gradient.
It can be shown that this algorithm is still transcript differentially private.
\subsection{Multi-label classification}
Our learning scheme can be easily extended to multi-label classification.
Notice that the proof of Lemma~\ref{lem:dp_main} does not depend on domain of labels $\mathcal{Y}$ being $\{0,1\}$.
When $\mathcal{Y}=\{0,1,2,\cdots,k-1\} $ with $k>2$,
as long as we can design differentially private perturbation scheme \textsf{GradPerturb},
then \textsf{TPSL} is still differentially private by Lemma~\ref{lem:dp_main}.

To deal with multiple labels, we design Algorithm~\ref{alg:noisy_gradient_multi}. It is a natural generalization of Algorithm~\ref{alg:noisy_gradient}.
Instead of only adding noise in the direction of $g_{1-y}-g_{y}$ as in the binary case,
we now add noise on each $g_{y'}-g_y$ for $y'=0,1,\cdots,k-1$.
Notice that the output of Algorithm~\ref{alg:noisy_gradient} can be interpreted as
{\small
\begin{align*}
    g_y+\sum_{i=0}^{k-1}u_i\cdot g_{i}
    = (1+u_y-\sum_{y'\neq y} u_{y'})g_y + \sum_{y'\neq y}u_{y'}(g_{y'}-g_y).
\end{align*}
}
The first term $(1+u_y-\sum_{y'\neq y} u_{y'})g_y$ is a scaled version of the correct gradient $g_y$.
The second term $\sum_{y'\neq y}u_{y'}(g_{y'}-g_y)$ is just the noise added on the direction of each $g_{y'}-g_{y}$.
\begin{algorithm}[!t]
%\SetAlgoLined
 \caption{\textsf{ GradPerturb} with multiple labels equipped with distribution $U$ over $\mathbb{R}^{k}$}\label{alg:noisy_gradient_multi}
\KwIn{True label $y\in\{0,1,\cdots,k-1\}$. Gradients $g_0,g_1,\cdots,g_{k-1}$ corresponds to label $0,1,\cdots,k-1$ respectively.}
%\KwPar{Distribution $U$ over $\mathbb{R}^{k}$.}
%Embedding $e=f_{\theta_N}(\mathbf{x})$ forwarded from \np{}. Model parameters $\theta_N$ and label $y$ held by \lp{}. Loss function $\ell$.}
% \For{$i\in\{0,1,\cdots,k-1\}$}{
% \lp{} computes $g_i=\frac{\partial \ell\left(h_{\theta_L}(f_{\theta_N}(\mathbf{x})),i\right)}{\partial f_{\theta_N}(\mathbf{x})}$.
% }
\lp{} samples $u\sim U$.\\
\Return{$g_y+\sum_{i=0}^{k-1}u_i\cdot g_{i}$.}
\end{algorithm}

We give two instances of the distribution $U$ that makes Algorithm~\ref{alg:noisy_gradient_multi} $(\epsilon,0)$-DP. As the consequence, \textsf{TPSL} is also $(2\epsilon,0)$-DP when equipped with such $U$.
\begin{itemize}
    \item (Multi-class Laplace perturbation) $U=(U_0,U_1\cdots,U_{k-1})$ where $U_i$ are i.i.d sampled from $\mathsf{Lap}(2/\epsilon)$.
    \item(Multi-class discrete perturbation) Let $e_i\in \mathbb{R}^d$ be the $i$-th (starting from 0) unit vector, that is, the $i$-th coordinate is 1 and all the rest coordinates are 0.
    Then $U$ is chosen from $\{e_i-e_y\}_{i=0}^{k-1}$ with probability
    \begin{align*}
        \Pr[U=e_i-e_y]=\begin{cases} \frac{e^{\epsilon}}{e^{\epsilon}+k-1} & \mbox{if $i=y$},\\
        \frac{1}{e^{\epsilon}+k-1} & \mbox{else.}
        \end{cases}
    \end{align*}
\end{itemize}

%For completeness, we give the proof of DP guarantee in the following lemmas.
The DP guarantees are given in the following lemmas. 
The detailed proof is deferred to Appendix~\ref{sec:missing_proofs}.
\begin{lemma}\label{lem:laplace_perturb_multi}
\textsf{GradPerturb} equipped with multi-class Laplace perturbation is $(\epsilon,0)$-DP.
\end{lemma}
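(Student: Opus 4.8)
The plan is to reduce the claim to a coordinate-wise Laplace comparison by reparametrizing the noise and then appealing to the post-processing property of differential privacy. First I would rewrite the output of Algorithm~\ref{alg:noisy_gradient_multi}. Since $\sum_{i=0}^{k-1}\mathbf{1}[i=y]\,g_i=g_y$, the returned vector $g_y+\sum_{i=0}^{k-1}u_i g_i$ equals $\sum_{i=0}^{k-1}V_i\,g_i$, where $V_i:=u_i+\mathbf{1}[i=y]$. Collecting the gradients into the fixed $d\times k$ matrix $G=[\,g_0\mid g_1\mid\cdots\mid g_{k-1}\,]$, the output is therefore $GV$, a deterministic linear image of the random vector $V=(V_0,\dots,V_{k-1})$. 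The key observation is that $G$ depends only on the features and the model parameters, which are identical for neighbouring $D,D'$; the only dependence on the label $y$ is through the law of $V$. Concretely, each $V_i$ is an independent $\mathsf{Lap}(2/\epsilon)$ variable whose mean is $\mathbf{1}[i=y]$, so under ``label $=y$'' the mean vector of $V$ is the $y$-th standard basis vector, and under ``label $=y'$'' it is the $y'$-th.

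Because $GV$ is a deterministic function of $V$, by the post-processing property of DP it suffices to show that the law of $V$ is $(\epsilon,0)$-DP, where the two neighbouring hypotheses are mean $\mathbf{1}[i=y]$ versus mean $\mathbf{1}[i=y']$ for arbitrary $y\neq y'$. Writing $p_y,p_{y'}$ for the two product densities and using the PDF $p(\cdot\mid b)$ of $\mathsf{Lap}(b)$, all coordinates $i\notin\{y,y'\}$ have identical marginals and cancel, leaving
\begin{align*}
\frac{p_y(v)}{p_{y'}(v)}
= \frac{p(v_y-1\mid 2/\epsilon)}{p(v_y\mid 2/\epsilon)}\cdot\frac{p(v_{y'}\mid 2/\epsilon)}{p(v_{y'}-1\mid 2/\epsilon)}.
\end{align*}
Each factor has the form $\exp\!\big((|t|-|t-1|)/(2/\epsilon)\big)$, and by the triangle inequality $\big||t|-|t-1|\big|\le 1$, so each factor is at most $e^{\epsilon/2}$. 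Multiplying the two bounds gives $p_y(v)/p_{y'}(v)\le e^{\epsilon}$ for every $v$, which is exactly the pointwise density condition yielding $(\epsilon,0)$-DP for $V$. Applying post-processing to the map $V\mapsto GV$ then gives the lemma.

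The main point to get right—rather than a genuine obstacle—is the reparametrization that pushes the entire label dependence into the \emph{mean} of a clean product-Laplace distribution. This is what lets us avoid reasoning directly about the density of $GV$, which is awkward because $G$ is in general neither square nor full rank, so $GV$ may be supported on a lower-dimensional affine subspace and possess no density on $\mathbb{R}^d$. I would also note that the scale $2/\epsilon$ is forced by the structure: switching the label changes exactly two coordinates of the mean of $V$ (coordinate $y$ from $1$ to $0$ and coordinate $y'$ from $0$ to $1$), each contributing a factor $e^{\epsilon/2}$, for a total budget of $e^{\epsilon}$.
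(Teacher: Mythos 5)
Your proof is correct and follows essentially the same route as the paper's: both encode the label as the basis vector $e_y\in\mathbb{R}^k$, add i.i.d.\ $\mathsf{Lap}(2/\epsilon)$ noise (whose $(\epsilon,0)$-DP guarantee comes from the $\ell_1$ sensitivity of $2$ between $e_y$ and $e_{y'}$), and then push the result through the fixed linear map $x\mapsto\sum_i x_i g_i$ via post-processing. The only difference is that you unfold the standard Laplace-mechanism bound into an explicit density-ratio calculation, whereas the paper cites it as a known property.
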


\begin{lemma}\label{lem:discrete_perturb_multi}
\textsf{GradPerturb} equipped with multi-class discrete perturbation is $(\epsilon,0)$-DP.
\end{lemma}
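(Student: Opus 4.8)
The plan is to recognize the multi-class discrete perturbation as a $k$-ary randomized response applied to the label, and then invoke post-processing. First I would unpack what \textsf{GradPerturb} actually returns. When the sampled vector is $U = e_i - e_y$, the output simplifies to
\[
g_y + \sum_{j=0}^{k-1} u_j g_j = g_y + (g_i - g_y) = g_i,
\]
so the output is always one of the fixed gradients, and in fact equals $g_J$ where $J$ is a random index with $\Pr[J = y] = \frac{e^{\epsilon}}{e^{\epsilon}+k-1}$ and $\Pr[J = i] = \frac{1}{e^{\epsilon}+k-1}$ for each $i \neq y$. The point to emphasize is that $g_0,\dots,g_{k-1}$ are functions of the feature and the model alone; since neighbouring datasets differ only in the label (this is the label-DP notion of neighbouring used throughout), these gradients are identical across $D$ and $D'$, and the \emph{only} input that varies is $y$.

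Second, I would argue that it suffices to establish $(\epsilon,0)$-DP for the index-selection mechanism $y \mapsto J$. The map $J \mapsto g_J$ is a fixed, label-independent deterministic function, so if $y \mapsto J$ is $(\epsilon,0)$-DP then the composition $y \mapsto g_J$ is $(\epsilon,0)$-DP by the post-processing property of differential privacy. This framing also cleanly dispatches the one genuine subtlety, namely that some of the $g_i$ might coincide as vectors: rather than reasoning about the output distribution on the possibly collapsed set $\{g_0,\dots,g_{k-1}\}$, I reason on the index space $\{0,\dots,k-1\}$, where the atoms are distinct by construction, and let post-processing absorb any collapse for free.

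Third, I would verify the index mechanism is $k$-ary randomized response and bound its privacy loss directly. For any two neighbouring labels $y, y'$ and any target index $j$, the likelihood ratio $\Pr[J = j \mid y] / \Pr[J = j \mid y']$ is a ratio of two numbers each lying in $\{\frac{e^{\epsilon}}{e^{\epsilon}+k-1}, \frac{1}{e^{\epsilon}+k-1}\}$; the extremal case occurs when the numerator is the ``stay'' probability $\frac{e^{\epsilon}}{e^{\epsilon}+k-1}$ (i.e. $j=y$) and the denominator is a ``move'' probability $\frac{1}{e^{\epsilon}+k-1}$ (since $j \neq y'$), giving exactly $e^{\epsilon}$ after the normalizer cancels. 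Hence $\Pr[J=j\mid y] \le e^{\epsilon}\,\Pr[J=j\mid y']$ for every $j$, which yields $(\epsilon,0)$-DP for the index mechanism and, via the post-processing step, for \textsf{GradPerturb}. I do not expect a real obstacle here: the computation mirrors the binary case of Lemma~\ref{lem:discrete_perturb}, and the only care needed is the bookkeeping of which index attains the extremal ratio together with the coincident-gradient point already resolved by the post-processing reduction.
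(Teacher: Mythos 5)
Your proposal is correct and is essentially the paper's argument: both identify the mechanism as $k$-ary randomized response on the label and bound the likelihood ratio of any output by $e^{\epsilon}$. The one refinement you add --- proving DP for the index-selection mechanism $y\mapsto J$ and then invoking post-processing for $J\mapsto g_J$ --- is not in the paper's proof of this lemma (the paper bounds $\Pr[\textsf{GradPerturb}(\cdot)=g_{\tilde y}]$ directly, implicitly treating the $g_i$ as distinct), but it mirrors the paper's own post-processing argument for the multi-class Laplace case and cleanly handles the degenerate situation where some gradients coincide; this is a minor but genuine improvement in rigor rather than a different route.
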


\section{Related Works}
% \vspace{-3mm}
\textbf{Differentially Private Machine Learning.}
Differential privacy and machine learning are closely related~\citep{dfh+15},
and there has been a long line of research on designing differentially private machine learning models~\citep{cms11,l11,vsbh13,sazl18,pth+20}.
For the case of deep learning,
since the breakthrough work of \citet{acg+16} of DP-SGD,
there has been many research works on differentially private deep learning~\citep{pae+16,psm+18,mrtz18,tb20}.
Recently, \citet{ggk+21} and \citet{mmp+21} studied the case where only labels are considered sensitive, and showed that with the same privacy budget, it can achieve better model performance compared to DP-SGD.
\vspace{-0.7mm}
In these works, differential privacy is usually enforced on the weights of machine learning models.
This is different from our setting of Vertical Federated Learning, where the adversary can also access the shared messages.
Hence we also need to make the communication differentially private.
\vspace{-0.8mm}

\textbf{Differentially Private Vertical Federated Learning.}
Prior to our work,
researchers have explored  using differential privacy in Vertical Federated Learning.
%For logistic regression, Wang et al.~\cite{wlh+20} proposed using Gaussian mechanism on the intermediate results to improve running time while maintain privacy.
For logistic regression, \citet{wlh+20} proposed using Gaussian mechanism on the intermediate results to improve running time while maintain privacy.
For neural networks,
\citet{cjs+20} added differentially private noise on each layer and showed that with sufficiently large noise, the privacy budget can be met.
% To enforce differential privacy,
% these works just apply the simple Gaussian mechanism,
% and the scale of the noise depends on the number of iterations and the number of training samples.
% In contrast,
% we dive deep into the perturbation scheme and deal with the noise more carefully.
% As a consequence,
% our noise scale is merely a function of the privacy budget.
% \xin{diff: 1) logistic regression vs nn 2): protect features vs protect labels}
Compared with these works,
our method can work with neural networks,
and since we focus on protecting label privacy, 
no extra gradient clipping is needed, which is in contrast to \citet{cjs+20} or DP-SGD~\citep{acg+16}.

% \vspace{-3mm}
\section{Conclusion}
% \vspace{-3mm}
In this work, we consider split learning in the multi-party computation setting and introduce the notion of transcript differential privacy to measure the label privacy. 
Inspired by the white-box attack, 
we propose a generic gradient-perturbation scheme \textsf{GradPerturb} that adds noise only on the optimal direction.
Based on \textsf{GradPerturb},
we propose our split learning framework of \textsf{TPSL}.
To the best of our knowledge,
this is the first split learning algorithm that provides provable differential privacy guarantees. 
Experiments over large-scale datasets show that with Laplace perturbation, \textsf{TPSL} can achieve good utility-privacy trade-off.

It remains an open question that how we can build a split learning protocol so that the input features can also be protected. 
A straightforward implementation to enforce differential privacy on input features is to add noise not only on the gradients in the backward phase, but also on the embeddings in the forward phase. 
It is would be interesting if we can find smarter ways than adding isotropic noise.
Another interesting question is that if we can build up connections between the privacy notions we introduce, including transcript DP and multiple attack AUCs.

\bibliographystyle{plainnat}
\bibliography{ref}

\newpage

\appendix
% NOTE: necessary when ptmx or no mathfont class option is given
%\input{pseudo_code}
\section{Label inference attacks from gradients}\label{sec:black_box_attack}
In this section,
we introduce two black-box attacks, norm attack and spectral attack.
We then introduce attack AUC as the evaluation metric for these attacks.

\subsection{Norm attack}
Norm attack~\citep{lsy+21} is a simple heuristic attack that is found to be effective on imbalanced datasets.
On gradient $g$,
the output of norm attack is a scalar 
\begin{align*}
    \texttt{NA}(g):=\|g\|_2^2.
\end{align*}
It turns out that positive samples tend to have larger norm compared to negative samples.
Hence norm attack can be used to recover the label information.

\subsection{Spectral attack}

Spectral attack is a singular value decomposition (SVD) based outlier detection method introduced by Tran, Li and Madry~\citep{tlm18}.
In particular,
they show that
\begin{lemma}[Lemma 3.1, Definition 3.1 in \cite{tlm18}]\label{lem:spectral_attack}
Fix $0<\epsilon<\frac{1}{2}$.
Let $D$, $W$ be two distributions over $\mathbb{R}^d$ with mean $\mu_D,\mu_W$ and covariance matrices $\Sigma_D,\Sigma_W$.
Let $F$ be a mixture distribution given by $F=(1-\epsilon)D+\epsilon W$.
If $\|\mu_D-\mu_W\|_2^2\geq \frac{6\sigma^2}{\epsilon}$,
then the following statement holds:
let $\mu_F$ be the mean of $F$ and $v$ be the top singular vector of the covariance matrix of $F$,
then there exists $t$>0 so that 
\begin{align*}
    \Pr_{X\sim D}[|\langle X-\mu_F,v\rangle |>t] < & ~\epsilon,\\
    \Pr_{X\sim W}[|\langle X-\mu_F,v\rangle |<t]< & ~\epsilon.
\end{align*}
\end{lemma}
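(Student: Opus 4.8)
The plan is to reduce everything to understanding the covariance matrix of the mixture $F$ and showing that its leading eigenvector must align with the mean-separation direction $\Delta := \mu_D - \mu_W$. (Here I read the implicit hypothesis that $\sigma^2$ is a uniform upper bound on the spectral norms $\|\Sigma_D\|_2,\|\Sigma_W\|_2$, so that every one-dimensional projection of $D$ or $W$ has variance at most $\sigma^2$.) First I would record the first two moments of the mixture. The mean is immediate, $\mu_F = (1-\epsilon)\mu_D + \epsilon\mu_W$. For the covariance, applying the law of total variance (conditioning on which component $X$ is drawn from) produces a rank-one ``signal'' term:
\begin{align*}
\Sigma_F = (1-\epsilon)\Sigma_D + \epsilon\Sigma_W + \epsilon(1-\epsilon)\,\Delta\Delta^\top.
\end{align*}
The point is that the first two ``within-cluster'' terms have spectral norm at most $\sigma^2$, whereas the last term is rank one with operator norm $\epsilon(1-\epsilon)\|\Delta\|_2^2$, which the hypothesis forces to dominate.

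Next I would show that the top singular vector $v$ of $\Sigma_F$ is well correlated with $\Delta$, and crucially I would do this \emph{without} invoking eigenvector-perturbation machinery such as Davis--Kahan, using only the variational characterization of $\lambda_1 := \|\Sigma_F\|_2 = v^\top \Sigma_F v$. On one hand, testing the quadratic form against the unit vector $\Delta/\|\Delta\|_2$ gives the lower bound $\lambda_1 \ge \epsilon(1-\epsilon)\|\Delta\|_2^2$ (the noise terms are PSD, hence only help). On the other hand, evaluating at $v$ and bounding $v^\top\Sigma_D v, v^\top\Sigma_W v \le \sigma^2$ gives $\lambda_1 \le \sigma^2 + \epsilon(1-\epsilon)\langle v,\Delta\rangle^2$. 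Chaining the two inequalities and using $\epsilon < \tfrac12$ together with $\|\Delta\|_2^2 \ge 6\sigma^2/\epsilon$ yields $\langle v,\Delta\rangle^2 \ge \tfrac23\|\Delta\|_2^2 \ge 4\sigma^2/\epsilon$, i.e. $|\langle v,\Delta\rangle|\ge 2\sigma/\sqrt{\epsilon}$. This is the quantitative heart of the statement: projecting onto $v$ retains a constant fraction of the mean gap.

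Finally I would convert the alignment into the claimed tail bounds. Projecting a centered sample, the scalar $\langle X-\mu_F,v\rangle$ has, under $X\sim D$, mean $\langle\mu_D-\mu_F,v\rangle = \epsilon\langle\Delta,v\rangle$ and variance $v^\top\Sigma_D v \le \sigma^2$; under $X\sim W$ it has mean $-(1-\epsilon)\langle\Delta,v\rangle$ and variance at most $\sigma^2$. Thus the two projected populations have means separated by exactly $\langle\Delta,v\rangle$, of magnitude $\ge 2\sigma/\sqrt{\epsilon}$, while each has standard deviation at most $\sigma$. Taking a threshold $t$ inside the gap between $\epsilon|\langle\Delta,v\rangle|$ and $(1-\epsilon)|\langle\Delta,v\rangle|$ (a suitable multiple of $\sigma/\sqrt{\epsilon}$ away from each projected mean) and applying Chebyshev's inequality to each population gives $\Pr_{X\sim D}[\,|\langle X-\mu_F,v\rangle| > t\,] < \epsilon$ and $\Pr_{X\sim W}[\,|\langle X-\mu_F,v\rangle| < t\,] < \epsilon$, which is the conclusion.

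The step I expect to be the main obstacle is this last one: forcing both Chebyshev estimates below $\epsilon$ \emph{simultaneously} while spending only the factor $6$ in the hypothesis. The window of admissible thresholds $t$ has width proportional to $(1-2\epsilon)|\langle\Delta,v\rangle|$, which shrinks as $\epsilon\to\tfrac12$, so the constants must be tracked carefully (one-sided Chebyshev/Cantelli on each tail rather than a crude union bound) to keep the argument valid across the whole range $0<\epsilon<\tfrac12$. By contrast, the alignment step is conceptually the crux but technically clean, precisely because the signal term is rank one, which lets the variational bound stand in for any perturbation-theoretic estimate.
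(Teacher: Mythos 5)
First, a point of comparison that matters for this review: the paper never proves this statement. It is quoted (with $\sigma$ left undefined) from \cite{tlm18} purely to motivate the spectral attack, and the appendix of proofs does not touch it, so your proposal has to stand entirely on its own. Its first two steps do stand, and they are the standard route: the decomposition $\Sigma_F=(1-\epsilon)\Sigma_D+\epsilon\Sigma_W+\epsilon(1-\epsilon)\Delta\Delta^\top$ is correct, and the variational argument giving $\langle v,\Delta\rangle^2\ge\|\Delta\|^2-\tfrac{\sigma^2}{\epsilon(1-\epsilon)}\ge\tfrac{2}{3}\|\Delta\|^2\ge\tfrac{4\sigma^2}{\epsilon}$ is clean and right; your reading of the implicit hypothesis $\|\Sigma_D\|_2,\|\Sigma_W\|_2\le\sigma^2$ is also the only sensible one.

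The gap is the final step, and it is worse than the ``careful constant tracking'' you hope for: under exactly your reading of the hypotheses, the statement with absolute-value thresholds is \emph{false} for moderate $\epsilon<\tfrac12$, so no refinement (Cantelli or otherwise) can complete the argument over the claimed range. The obstruction you identified is real and fatal: second-moment information alone permits a mass-$\epsilon$ atom at distance $\sigma\sqrt{(1-\epsilon)/\epsilon}$ from a distribution's mean (this saturates the variance budget), so against such instances any admissible $t$ must satisfy both $t\ge \epsilon s+\sigma\sqrt{(1-\epsilon)/\epsilon}$ and $t\le (1-\epsilon)s-\sigma\sqrt{(1-\epsilon)/\epsilon}$ with $s=|\langle\Delta,v\rangle|\le\|\Delta\|$, forcing $(1-2\epsilon)\|\Delta\|\ge 2\sigma\sqrt{(1-\epsilon)/\epsilon}$; with $\|\Delta\|^2=6\sigma^2/\epsilon$ this reduces to $12\epsilon^2-10\epsilon+1\ge 0$, which fails for all $\epsilon\in(0.117,0.5)$. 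Concretely, take $d=1$, $\sigma=1$, $\epsilon=0.3$, $\mu_D=0$, $\mu_W=-\sqrt{20}$, and let each of $D,W$ put mass $\epsilon$ at distance $u=\sqrt{7/3}\approx 1.53$ from its mean (toward the other cluster for $W$, away for $D$) and mass $1-\epsilon$ at distance $\tfrac{\epsilon u}{1-\epsilon}$ on the opposite side: both variances equal $1$, the hypothesis holds with equality, yet relative to $\mu_F$ the distribution $D$ has a mass-$0.3$ atom at $\approx 2.87$ while $W$ has a mass-$0.3$ atom at distance $\approx 1.60$, so every $t<2.87$ gives $\Pr_D[|Z|>t]\ge 0.3$ and every $t\ge 2.87$ gives $\Pr_W[|Z|<t]\ge 0.3$. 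Your proof can only be salvaged in two ways: restrict to small $\epsilon$ (the plain Chebyshev window is nonempty only for roughly $\epsilon\lesssim 0.05$, and the statement itself fails beyond $\epsilon\approx 0.12$), or replace $|\langle X-\mu_F,v\rangle|$ by the \emph{signed} projection $\langle X-\mu_F,v\rangle$, in which case the required window condition becomes $s>2\sigma/\sqrt{\epsilon}$, exactly what your alignment step provides, and the argument then goes through for all $\epsilon<\tfrac12$.
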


\citet{syy+21} shows that the above lemma can be used as a 2-clustering algorithm: let $D$ and $W$ be two distributions that we would like to distinguish, and we are given a collection of samples $F$ chosen from $D$ and $W$. 
Then we can use the value $|\langle X-\mu_F,v\rangle |$ to cluster the samples.

We use this idea to build a black-box attack, which we call spectral attack.
In particular,
we think of $F$ as the distribution of gradients of positive samples,
and $W$ as the distribution of gradients of negative samples.
In our experiment,
we conduct spectral attack in every mini-batch.
We estimate $\mu_F$ and $v$ by computing the empirical mean and covariance matrix with data in the mini-batch.
Then the output of spectral attack is computed by
\begin{align*}
    \texttt{SA}(g):=|\langle g-\mu_F,v\rangle |.
\end{align*}

\subsection{Attack AUC}\label{sec:attack_auc}
As the output of these attacks may not be Boolean,
standard metrics like accuracy, precision and recall do not evaluate the effectiveness of the attacks well.
In order to get rid of the problem of setting threshold for binary decision,
we instead report the \emph{attack AUC} (area under curve),
which we introduce as follows.
% For different threshold $t$,
% we can turn the fractional output of the attack into Boolean decision, and obtain corresponding \textit{False  Positive Rate} (FPR) and \textit{False Negative Rate} (FNR).
% By varying the threshold $t$,
% we obtain the ROC (receiver operating characteristic) curve of the attack,
% and the attack AUC is definded as the area under the ROC curve.
% An AUC value of $1$ means perfect identification of true labels,
% while an AUC value around $0.5$ means the attack behaves similarly to a random guess.
Fix a set of samples $(g_1,y_1),\cdots,(g_k,y_k)$.
Let the output of some attack be $(s_1,\cdots,s_k)$.
For different threshold $t$,
we can turn the fractional output of the attack into Boolean decision, and obtain corresponding \textit{False  Positive Rate} (FPR) and \textit{True Positive Rate} (TPR) by
\begin{align*}
    FPR(t):=&~\frac{|i\in[k]:s_i\geq t,y_i=0|}{|i\in[k]:y_i=0|},\\
    TPR(t):=&~\frac{|i\in[k]:s_i\geq t ,y_i=1|}{|i\in[k]:y_i=1|}.
\end{align*}
Then the attack AUC is defined the area under the ROC(receiver operating characteristic) curve,
which is
\begin{align*}
    \text{attack AUC}:=\int_{+\infty}^{-\infty} TPR(t) d FPR(t).
\end{align*}
\section{Proofs}\label{sec:missing_proofs}
In this section we present proofs that are omitted in the main paper.

\subsection{Proof of Lemma~\ref{lem:laplace_perturb}}
\begin{proof}[Proof of Lemma~\ref{lem:laplace_perturb}]
Fix $g_0,g_1$.
We start with an auxiliary statement. 
Let the universe $\mathcal{X}\subset \mathbb{R}$ be $\{0,1\}$. 
Consider the identical mapping $f(x):=x$ and the Laplace mechanism $f^{DP}=f+r$ with $r\sim \mathsf{Lap}(b)$.
Then the claim is that 
when $b\geq \frac{1}{\epsilon}$,
$f^{DP}$ is $(\epsilon,0)$-DP.
This is because the $\ell_1$ sensitive of $f$ is $1$,
hence the DP guarantee follows from the property of Laplace mechanism~\citep{dr14}.

Now we consider the deterministic mapping $h:\mathbb{R}\rightarrow \mathbb{R}^d$ defined as
\begin{align*}
    h(x)= x\cdot g_1 + (1-x)\cdot g_0.
\end{align*}
Since $f^{DP}$ is $(\epsilon,0)$-DP,
and DP is immune to post processing,
$h(f^{DP})$ is also $(\epsilon,0)$-DP.
On the other hand, we have
\begin{align*}
    h(f^{DP}(0))= & ~g_0+r\cdot(g_1-g_0),\\
    h(f^{DP}(1))= & ~g_1-r\cdot(g_0-g_1).
\end{align*}
Since $r$ is symmetric, $-r$ is distributed identically to $r$.
Hence we conclude that $h(f^{DP})$ is the same as $\textsf{GradPerturb}$,
which completes the proof of the lemma.
\end{proof}

\subsection{Proof of Lemma~\ref{lem:discrete_perturb}}
\begin{proof}[Proof of Lemma~\ref{lem:discrete_perturb}]
When $U=\mathsf{Bern}(p)$,
we have $\Pr[\textsf{GradPerturb}(y,g_0,g_1)=g_y]=1-p$ and $\Pr[\textsf{GradPerturb}(y,g_0,g_1)=g_{1-y}]=p$.
Hence for all $g\in \{g_0,g_1\}$ and $y\in \{0,1\}$,
we have
\begin{align*}
    \frac{\Pr[\textsf{GradPerturb}(y,g_0,g_1)=g]}{\Pr[\textsf{GradPerturb}(1-y,g_0,g_1)=g]}\leq \frac{1-p}{p},
\end{align*}
which follows from $p\leq \frac{1}{2}$.
This means $\textsf{GradPerturb}$ is $(\ln \frac{1-p}{p},0)$-DP.
\end{proof}

\subsection{Proof of Lemma~\ref{lem:laplace_perturb_multi}}
\begin{proof}[Proof of Lemma~\ref{lem:laplace_perturb_multi}]
We first prove an auxiliary statement: suppose that we have the universe $\mathcal{X}:=\{e_i\}_{i=0}^{k-1}\subset \mathbb{R}^{k}$ and the identity mapping $f(x)=x$,
then the randomized mechanism $f^{DP}:=f+(r_0,r_1,\cdots,r_{k-1})$ is $(\epsilon,0)$-DP where $r_i$ are i.i.d. sampled from $\mathsf{Lap}(2/\epsilon)$. 
This is because the $\ell_1$ sensitivity of $f$ is $2$.

We then consider the deterministic mapping $h:\mathbb{R}^{k}\rightarrow \mathbb{R}^d$ given by
\begin{align*}
    h(x_0,x_1,\cdots,x_{k-1}) = \sum_{i=0}^{k-1} x_ig_i.
\end{align*}
It is easy to verify that $h(f^{DP})$ is just the output of $\textsf{GradPerturb}$.
Hence by the post-processing property of differential privacy,
we have that $\textsf{GradPerturb}$ is $(\epsilon,0)$-DP.
\end{proof}

\subsection{Proof of Lemma~\ref{lem:discrete_perturb_multi}}
\begin{proof}[Proof of Lemma~\ref{lem:discrete_perturb_multi}]
With multi-class discrete perturbation,
the output of \textsf{GradPerturb} is $g_{\tilde y}$ for $\tilde y \in \{0,1,\cdots,k-1\}$.
Furthermore,
for any $y,\tilde y\in \{0,1,\cdots,k-1\}$ we have
\begin{align*}
    \frac{1}{e^{\epsilon}+k-1}\leq \Pr[\textsf{GradPerturb}(y,g_0,g_1,\cdots,g_{k-1})=g_{\tilde y}]\leq \frac{e^{\epsilon}}{e^{\epsilon}+k-1}.
\end{align*}
Hence,
for $y\neq y'$,
\begin{align*}
    \frac{\Pr[\textsf{GradPerturb}(y,g_0,g_1,\cdots,g_{k-1})=g_{\tilde y}]}{\Pr[\textsf{GradPerturb}(y',g_0,g_1,\cdots,g_{k-1})=g_{\tilde y}]}\leq e^{\epsilon},
\end{align*}
which completes the proof of the lemma.
\end{proof}

\end{document}